\documentclass[10pt]{article}

\usepackage[utf8]{inputenc} 
\usepackage[T1]{fontenc}    
\usepackage{hyperref}       
\usepackage{url}            
\usepackage{booktabs}       
\usepackage{amsfonts}       
\usepackage{nicefrac}       
\usepackage{microtype}      
\usepackage{xcolor}         
\usepackage{adjustbox}
\usepackage{amsmath,amssymb,amsfonts}
\usepackage{amsthm}
\newtheorem{proposition}{Proposition}

\newtheorem{lemma}{Lemma}
\newtheorem{remark}{Remark}
\usepackage{graphicx}
\usepackage{booktabs}
\usepackage{algorithm}
\usepackage{algpseudocode}
\usepackage{hyperref}
\usepackage{natbib}
\usepackage{caption}
\usepackage{times}
\usepackage{tikz}
\usepackage{quantikz}
\usepackage{verbatim}
\usetikzlibrary{arrows.meta,positioning,fit,backgrounds,shapes.multipart}
\usepackage{subcaption}
\usepackage{braket}

\def \f{\mathrm{f}}
\def \n{\mathrm{n}}

\newcommand{\be}{\begin{equation}}
\newcommand{\ee}{\end{equation}}
\newcommand{\beano}{\begin{eqnarray*}}
\newcommand{\eeano}{\end{eqnarray*}}

\def\bmatrix#1{\left[ \begin{matrix} #1 \end{matrix} \right]}

\title{A Cognitive-Aware  QML-CRL Framework for Detecting Affinity and Romance-Investment Fraud}

\date{}
\author{Bibhas Adhikari\thanks{Corresponding author, Fujitsu Research of America, Inc., Santa Clara, CA, USA, Email: badhikari@fujitsu.com } \,\, and \,\,   Ramya Srinivasan\thanks{Security Science Lab, Fujitsu Research of America, Santa Clara, CA 95054 Inc. Email: ramya@fujitsu.com } }

\begin{document}

\maketitle

\begin{abstract}
We present a hybrid quantum-classical framework that detects affinity and
romance-investment fraud by modelling the cognitive biases in a manipulative
conversation. In our proposed framework, cognitive biases central to this fraud class are carried by dedicated
qubits in a structured parameterized quantum circuit, together
with a frame qubit makes the encoding
sensitive to the temporal order of manipulative reframing, and a narrative
qubit that aggregates co-occurrence through a trainable entanglement
layer. The circuit parameters are trained
jointly with a classical reinforcement-learning agent that decides,
turn by turn, whether to flag the conversation, modeled as an optimal
stopping problem. We evaluate the model's performance on synthetic conversations that include
hard negatives, legitimate but urgent, and legitimate but pushy sales
conversations.
\end{abstract}

\vspace{6pt}
\noindent\textbf{Keywords:} Parameterized Quantum Circuits, Quantum Machine
Learning, Reinforcement Learning, Quantum Cognition, Fraud Detection,
Order Effects, Conjunction Fallacy

\section{Introduction}

Affinity and romance-investment fraud exploit trust built through shared
identity or emotional intimacy to bypass the scrutiny that a person would
normally apply to a financial decision. In affinity fraud, perpetrators
target members of an identifiable group such as a religious congregation,
ethnic community, or professional network. This helps the perpetrators to build in-group trust and the
social vouching reduce due diligence. The U.S.A. Securities and Exchange
Commission has long flagged this as a recurring driver of large-scale
Ponzi schemes \cite{securities2013affinity}. On the other hand, romance-investment fraud follows a closely related but
distinct mechanism, documented in detail by Whitty and Buchanan \cite{whitty2013}. This fraud is based on the scam's
staged structure: relationship-building, a small demonstration of
``legitimacy'', and then escalating financial requests. It is also investigated in the literature why victims
continue sending money even after clear warning signs appear, attributing
it to the relationship narrative overriding financial judgment rather than
any deficit in the victim's general reasoning ability \cite{buchananwhitty2014}.
In  the UK Office of Fair Trading report, this pattern appears across fraud types,
showing that authority, urgency, and social-proof cues systematically
override rational risk assessment once a persuasive relationship or
narrative has been established \cite{leafischerevans2009}. Thus, affinity and romance-investment fraud succeed by exploiting cognitive
biases in a specific, staged sequence rather than by any single deceptive
claim \citep{pratkanis2005weapons,whitty2013,buchananwhitty2014,leafischerevans2009,coluccia2020online,dearden2025mechanisms}.

In their proposal of classical
Behavioral Reinforcement Learning (BRL) in \citep{buetgolfouse2025brl}, the authors show how to embed biases such as Loss Aversion through an asymmetric utility
function into a reinforcement learning framework. Indeed, the utility function in their paper represents a pointwise transform of a scalar reward which is not designed to reproduce two well established human judgment phenomena: \textit{order effects} 
\citep{tversky1974,busemeyer2012} and the \textit{conjunction fallacy} \citep{tversky1983conjunction,busemeyer2011conjunction}. These two notions play a key role into the development of quantum cognition model \cite{busemeyer2015quantum,widdows2023quantum}, and
both are central to how affinity and romance-investment fraud actually
work
\citep{pratkanis2005weapons}. The role of cognitive biases in financial decision making is explored in the literature in different contexts, for instance, see \cite{ruggeri2023persistence,de2024financial,ruggeri2024financial,hammad2026cognitive,tkacz2025exploration,gabhane2023behavioral}. There are also studies to employ cognitive biases into define feature vectors for learning tasks \cite{cardie2000cognitive,kaleem2017cognitive}.
Quantum-cognitive models of decision-making \citep{busemeyer2012,khrennikova2014order} address
both by representing beliefs as vectors in a Hilbert space and questions
as (generally non-commuting) projective measurements. 

In this paper, we consider the question: \textit{given a conversation, can a
system watching it learn to detect that this kind of manipulation is under
way, and if so, when?} We investigate it with the use of a structured parameterized
quantum circuit (PQC), often refereed to as Quantum Neural Network or ansatz \cite{abbas2021power} in the context of Quantum Machine Learning (QML), acting as a trainable cognitive encoder  followed by a
Classical Reinforcement Learning (CRL) stopping agent. In our framework, we consider a quantum register of $n$ qubits to encode $n$ cognitive biases driven by cue-conditioned rotation modeled by rotation parametrized gates and two additional qubits: a \textit{frame} qubit that carries the temporal
order of manipulative reframing through non-commuting rotations, and a
\textit{narrative} qubit that aggregates co-occurrence. Indeed, the parametrized rotation operators encode the cognitive bias cues, whose values are trained and learned with the aid of the RL agent. The proposed scam 
detector therefore does not merely use a quantum representation of
cognitive bias, it learns one, under an objective that is the
deployment objective (flag early enough, but do not cry wolf) rather than a
surrogate classification loss. Thus, our framework introduces a quantum-classical hybrid QML-CRL model for scam detection, which combines a cognitive-aware quantum feature encoder with a classical reinforcement learning agent, providing a realization of Quantum Reinforcement Learning (QRL) \cite{meyer2022survey,chen2026quantum} model for sequential scam detection.


The structure of the ansatz that we consider in this paper is motivated by the following observation in the context of the scam. 
First, manipulative reframing
is order-dependent: presenting a loss frame before a recovery frame is not
equivalent to the reverse, and a same-axis rotation cannot distinguish the
two, since $R_Y(\theta_1)R_Y(\theta_2)=R_Y(\theta_1+\theta_2)$ regardless of
application order. We address this with a frame qubit that receives $R_Z$ for
loss-framing cues and $R_Y$ for recovery-framing cues; because
$[R_Y(\theta),R_Z(\phi)] = -2i\sin(\theta/2)\sin(\phi/2)X \neq 0$
(Proposition~\ref{prop:noncommute}), the resulting state depends on the
temporal order in which the two framings occur. We show, however, that
non-commutativity of the gates is not by itself sufficient and the measurement basis must also avoid a degenerate choice at which the
order dependence is invisible (Proposition~\ref{prop:orderblind}), and we
therefore make that basis a trained parameter. Second, the biases do
not operate independently i.e. a target's judgement that a person is trustworthy
and that an opportunity is legitimate are assessed as one mutually
reinforcing narrative. Since no joint measurement can exceed the
Fr\'echet bound (Proposition~\ref{prop:frechet}), we use the narrative qubit
not to violate that bound but as an order-insensitive coherence aggregate,
implemented by controlled rotations whose angles are trained rather than
fixed. The parameters of the rotation gates are functions of cues (which are directly obtained from the conversation using an Natural Language Processing (NLP) tool for the conversation in each turn) and the cue thresholds. Besides, there are entangler parameters which are incorporated to learn the narrative coherence in the conversation. 

We emphasize that order effects and the conjunction fallacy are well documented violations of
Kolmogorov probability applied to a single fixed sample space \cite{wang2006order,trueblood2011quantum,franco2009conjunction}. This constrains probabilistic models of the judgement itself but not the classical learning
machinery in general. In fact, any recurrent or attention-based sequence model is
order-sensitive simply because its state updates do not commute, and indeed our
encoder is efficiently classically simulatable. Our claim is therefore
one of inductive bias rather than expressive necessity i.e. the quantum formalism
supplies a compact, interpretable encoding in which order sensitivity, bounded
monotone readouts, and Fr\'echet-respecting aggregation follow from the
structure of the representation rather than having to be learned from data or
imposed by regularization. 

In our numerical demonstration, we consider four cognitive biases as described below, whose existence in these frauds can be clearly justified.
\begin{itemize}
\item \textbf{Loss aversion} \citep{kahneman2013prospect} - the engine of
the funnel's ``recovery pressure'' stage: once money is sunk, framing
continuation as avoiding a certain loss (rather than risking more) triggers
risk-seeking behavior in the loss domain.
\item \textbf{Commitment (and consistency)} \citep{cialdini2001} - having
made a small commitment, a target feels pressure to act consistently with
it rather than admit it was a mistake; this compounds with loss aversion
rather than acting as an independent factor.
\item \textbf{Reciprocity} \citep{cialdini2001} - a real small payout (or,
in affinity fraud, a community member's personal vouching) creates a felt
obligation to reciprocate trust, directly enabling the next, larger ask.
\item \textbf{Authority} \citep{cialdini2001} - borrowed authority (a
trusted community figure in affinity fraud; fabricated credentials in
romance-investment fraud) is a primary trust-conferral mechanism
\citep{perribrody2011}.
\end{itemize}

The CRL framework is then designed to convert the quantum encoder's continuous, per-turn risk signal into a single actionable decision: when, if ever, to flag a conversation. The RL agent  learns from labeled outcomes to trade off the cost of a false alarm against the cost of intervening too late, rather than simply classifying the conversation after the fact. Thus, we frame the detection as an \textit{optimal stopping problem}. The RL layer receives no information about the conversation except the quantum register's readout. We perform numerical test on synthetic, labelled conversation
data: order-effect magnitude at both the feature level and the detection-timing level, and an ablation study isolating each encoder channel's contribution
to detection quality. 

The remainder of this paper is organized as follows. Section~\ref{sec:encoder} introduces the proposed cognitive-aware PQC for modeling manipulative conversations. Section~\ref{sec:rl} presents the joint QML-CRL framework for scam detection. Section~\ref{sec:testing} reports numerical results on synthetic data. Finally, the paper concludes with a discussion on limitations and future work in Section \ref{sec:limits}, followed by supplementary material after the references.

\begin{figure*}[h]
\centering
\begin{subfigure}[b]{\linewidth}
\centering
\adjustbox{max width=\linewidth}{%
\begin{tikzpicture}[
  box/.style={draw, rounded corners, minimum height=0.9cm,
              minimum width=2.6cm, align=center, font=\scriptsize,
              fill=blue!4, inner sep=2pt},
  tbox/.style={box, fill=orange!14, very thick},
  arr/.style={-{Latex[length=1.8mm]}, thick},
  grad/.style={-{Latex[length=1.8mm]}, thick, dashed, red!65!black}
]
\node[box]  (text) at (0,0)    {Raw conversation\\turns $t=1,2,\dots$};
\node[box]  (nlp)  at (2.95,0) {NLP cue extraction\\$c_t\in[0,1]^{n}\!\times\![-1,1]$};
\node[tbox] (enc)  at (5.90,0) {\textbf{PQC encoder} $U(\Theta)$\\
                                $n{+}2$ qubits};
\node[box]  (feat) at (8.85,0) {Risk features\\$s_t(\Theta)$, Eq.~\eqref{eq:state}};
\node[tbox] (rl)   at (11.8,0) {\textbf{Linear Q head}\\$Q(s_t,a)=w_a\!\cdot\! s_t(\Theta)$};
\node[box, fill=red!10] (flag) at (11.8,-1.85) {\textsc{Flag}: high-risk\\(episode ends)};

\draw[arr] (text) -- (nlp);
\draw[arr] (nlp)  -- (enc);
\draw[arr] (enc)  -- (feat);
\draw[arr] (feat) -- (rl);
\draw[arr] (rl) -- node[right, font=\tiny]{\textsc{Flag}} (flag);
\draw[arr] (rl.north) to[out=90,in=90,looseness=0.45]
  node[above, font=\tiny]{\textsc{Wait}: advance to turn $t{+}1$} (nlp.north);
\draw[grad] (rl.east) -- (13.6,0) -- (13.6,-3.15) -- (5.90,-3.15)
  node[above, pos=0.55, font=\tiny, red!65!black, fill=white,
       inner sep=2pt, yshift=1pt]
  {backward pass: $\;\delta_t\,\nabla_\Theta Q
    =\delta_t\,w_a^{\!\top}\,\partial s_t/\partial\Theta\;$ (parameter shift);
   the same $\delta_t$ updates $w_a$} -- (enc.south);
\end{tikzpicture}%
}
\caption{End-to-end pipeline. Solid: inference. Dashed: the
temporal-difference error, which trains the Q head and flows back
into the circuit parameters $\Theta$.}
\label{fig:architecture}
\end{subfigure}

\vspace{6pt}
\begin{subfigure}[b]{\linewidth}
\centering
\adjustbox{max width=\linewidth}{%
\begin{tikzpicture}[
  qbox/.style={draw, minimum height=0.72cm, minimum width=4.0cm,
               align=center, font=\scriptsize, fill=orange!10, inner sep=2pt},
  fbox/.style={draw, minimum height=1.1cm, minimum width=4.6cm,
               align=center, font=\scriptsize, fill=purple!12, inner sep=2pt},
  ancbox/.style={draw, minimum height=1.5cm, minimum width=3.4cm,
                 align=center, font=\scriptsize, fill=green!12, inner sep=2pt},
  robox/.style={draw, minimum height=0.85cm, minimum width=3.4cm,
                align=center, font=\scriptsize, fill=purple!8, inner sep=2pt},
  note/.style={font=\scriptsize, align=center},
  arr/.style={-{Latex[length=1.9mm]}, thick}
]
\node[qbox] (q0) at (0,0)     {$q_0$ loss \quad $R_Y(\Delta_0^t)$};
\node[qbox] (q1) at (0,-0.92) {$q_1$ commitment \quad $R_Y(\Delta_1^t)$};
\node[qbox] (q2) at (0,-1.84) {$q_2$ reciprocity \quad $R_Y(\Delta_2^t)$};
\node[qbox] (q3) at (0,-2.76) {$q_3$ authority \quad $R_Y(\Delta_3^t)$};
\node[note, text width=8.2cm] (dnote) at (0,-3.95)
   {$\Delta_i^t=\bigl(\pi-\theta_i^{t-1}\bigr)\,
     \sigma\!\bigl(\mathbf{w_i}\,c_{t,i}+\mathbf{b_i}\bigr)$\;---\;
    trainable gain and threshold,\\ enforcing $\theta_i^t<\pi$ without a clip};

\node[fbox] (q4) at (0,-5.45) {$q_4$ frame, prepared in $\ket{+}$\\[2pt]
   $R_Y\bigl(\boldsymbol{\zeta_y}[-c_{t,f}]_+\bigr)\,
    R_Z\bigl(\boldsymbol{\zeta_z}[c_{t,f}]_+\bigr)$};
\node[note] at (0,-6.35) {non-commuting (Prop.~\ref{prop:noncommute})};

\node[ancbox] (q5) at (6.2,-1.0) {$q_5$ narrative qubit\\[2pt]
   $\mathrm{C}R_Y^{\,q_j\to q_5}(\boldsymbol{\varphi_j})$\\[2pt]
   trainable; $\varphi_j=\pi\Rightarrow\mathrm{CNOT}$};
\node[note, text width=4.0cm] at (6.2,-2.75)
   {closed form, Eq.~\eqref{eq:narrclosed}:\\[1pt]
    $\langle Z_5\rangle=\mathrm{Re}\prod_j\bigl(1+p_j(e^{i\varphi_j}-1)\bigr)$};

\node[robox] (mu) at (6.2,-5.45) {readout basis $R_Y(\boldsymbol{\mu})$\\trainable};
\node[note, text width=4.0cm] at (6.2,-6.45)
   {order-blind at $\mu=\pm\pi/2$\\(Prop.~\ref{prop:orderblind})};

\draw[arr] (q0.east) to[out=0,in=150] (q5.north west);
\draw[arr] (q1.east) to[out=0,in=170] (q5.west);
\draw[arr] (q2.east) to[out=0,in=185] (q5.west);
\draw[arr] (q3.east) to[out=0,in=-160] (q5.south west);
\draw[arr] (q4.east) -- (mu.west);
\draw[arr] (mu.east) -- (8.6,-5.45) -- (8.6,-1.0) -- (q5.east);
\end{tikzpicture}%
}
\caption{Parameterized encoder detail; bold symbols are trained.}
\label{fig:encoder-detail}
\end{subfigure}
\caption{Complete architecture. (a) The end-to-end pipeline from raw
conversation text to a flag/wait decision. (b) Detail of the
parameterized quantum-cognitive encoder.}
\label{fig:architecture-full}
\end{figure*}

\section{Quantum-Cognitive PQC Encoder}
\label{sec:encoder}

In this section, we propose a PQC on an $(n+2)$-qubit quantum register that acts as a quantum-cognitive encoder. The first $n$-qubits, called the bias qubits encode $n$ cognitive biases, whereas the frame and narrative qubits are used to monitor the ordering and coherence of the conversation, respectively. The Figure \ref{fig:architecture-full} describes the end-to-end architecture of our proposal incorporating the four biases, as described in the introduction. The Algorithm \ref{alg:encode} describes the working of the PQC for the general context.

\subsection{Quantum Register}

The bias qubits are labelled $q_0,\dots,q_{n-1}$, the frame qubit is $q_n$ (also denoted as $q_\f$ to be explicit), and
the narrative qubit is $q_{n+1}$ (also denoted as $q_\n$ to be explicit), and we write $\text{pull}^t_\f$ and
$\text{pull}^t_{\mathrm{nar}}$ for their readouts, so that
$\text{pull}^t_\f=-\langle Z_n\rangle_t$ and
$\text{pull}^t_{\mathrm{nar}}=-\langle Z_{n+1}\rangle_t$. Bias and narrative
qubits are prepared in $\ket{0}$ and the frame qubit in $\ket{+}=H\ket{0}$, for
the reason given in the supplementary notes, so that $\ket{\psi(0)}=\ket{0}^{\otimes n}\otimes\ket{+}\otimes\ket{0}$, which is the initial state of the quantum register. 
 At every turn $t,$ we determine the risk feature vector 
\begin{equation}
s_t=\Bigl(\text{pull}^t_0,\dots,\text{pull}^t_{n-1},\;
\text{pull}^t_\f,\;\text{pull}^t_{\mathrm{nar}},\;
1-e^{-t/\tau_0},\;1\Bigr)
\;\in\;[-1,1]^{n+2}\times[0,1)\times\{1\}\;\subset\;\mathbb{R}^{n+4},
\label{eq:state}
\end{equation}
where $\mbox{pull}_i^{(t)} \in [-1, 1]$ are defined below, $\tau_0$ is a fixed time constant, $1-e^{-t/\tau_0}$ represents the saturating time feature, the trailing constant supplies the intercept of the linear head and
together with the saturating time feature it is independent of $\Theta$. The proposed structured PQC consists of two sub-circuits. In the first sub-circuit, for each turn $t,$ the output quantum state is given by $$\ket{\psi(t)} = U_r(t)\cdots U_r(1)\ket{\psi(0)}$$ with per-turn block $U_r$ and the initial state $\ket{\psi(0)}$. The readout
block is applied to a copy of this state,
\begin{equation}
\ket{\chi(t)}=U_e(\varphi)\,\bigl(R_Y(\mu)\ \text{on}\ q_n\bigr)\ket{\psi(t)},
\end{equation}
and the risk features are read from $\ket{\chi(t)}$ rather than
$\ket{\psi(t)}$.
The sub-circuits are depicted in Figure \ref{fig:circuit-full}. 


An NLP front end (specified, not built here)
produces a per-turn cue vector
$c_t=(c_{t,0},\dots,c_{t,n-1},c_{t,\f}) \in [0,1]^{n}\times [-1,1]$. 
The trainable parameter vector is
\begin{equation}
\Theta = \bigl(\,\underbrace{w_0,\dots,w_{n-1}}_{\text{cue gains}},\;
\underbrace{b_0,\dots,b_{n-1}}_{\text{cue thresholds}},\;
\rho_z,\rho_y,\;
\underbrace{\varphi_0,\dots,\varphi_{n}}_{\text{entangler}},\;
\mu \,\bigr) \in \mathbb{R}^{3n+4}.
\label{eq:params}
\end{equation}
For instance, for $n=4$ biases, there are $16$ circuit parameters i.e. $|\Theta| = 16$.

At each turn $t,$ for each bias qubit, we apply a rotation gate $R_Y(\Delta_i^t)$ to
$q_i$ with
\begin{equation}
\Delta_i^t=\bigl(\pi-\theta_i^{t-1}\bigr)\,g_i^t,
\qquad
 g_i^t=\sigma\!\left(w_i\,c_{t,i}+b_i\right),
\qquad
\theta_i^t=\theta_i^{t-1}+\Delta_i^t ,
\label{eq:gate}
\end{equation}
where $\sigma(x)=1/(1+e^{-x})$ is the logistic function and $\theta_i^0=0$. Note that, since
$g_i^t\in(0,1)$, the cumulative angle satisfies $\theta_i^t<\pi$ for every
$t$ by construction, and hence the monotonicity of
$\text{pull}_i^t=-\langle Z_i\rangle_t=-\cos\theta_i^t$ on $[0,\pi]$ is enforced.
Unrolling \eqref{eq:gate} gives a closed form that makes the semantics
explicit,
\begin{equation}
\theta_i^t=\pi\Bigl(1-\prod_{s\le t}\bigl(1-g_i^s\bigr)\Bigr),
\label{eq:softor}
\end{equation}
i.e.\ the bias qubit's angle is a soft OR, a differentiable relaxation of
the Boolean OR over turns, saturating as the learned probability that at least
one turn so far carried cue $i$. Both the gain $w_i$ and the threshold $b_i$
are trainable parameters. The parameter $b_i$ plays the role of the hand-set cue threshold of
a fixed encoder and $w_i$ that of its gain, but one per channel and fitted to
data rather than shared and guessed. 
The monotonic $\text{pull}_i$ is given by $\text{pull}_i=-\langle Z_i\rangle=-\cos(\theta_i)\in[-1,1]$ for $i=0,\hdots, n-1$ (see Eqn. \ref{eqn:expZ}).

Let us denote $[x]_+=\max(x,0)$, $\zeta_z=\mathrm{softplus}(\rho_z)$, and
$\zeta_y=\mathrm{softplus}(\rho_y)$ for some trainable parameters $\rho_z$ and $\rho_y$. For the frame qubit, at each turn $t$ apply 
\begin{equation}
R_Y(v_t)\,R_Z(u_t),
\qquad
u_t=\zeta_z\,[\,c_{t,\f}\,]_+ ,
\qquad
v_t=\zeta_y\,[-c_{t,\f}\,]_+ ,
\label{eq:frame}
\end{equation} where $\mathrm{softplus}(x)=\log(1+e^x)$ with $\frac{d}{dx}\mathrm{softplus}(x)=\sigma(x).$
Thus a loss-framing term $c_{t,\f}>0$ drives the $Z$-axis, whereas a recovery framing turn $c_{t,\f}<0$ the $Y$-axis with trainable gains. Further, note that the application of $R_Y(\mu)$ on $q_\f$ in the Read-out block realizes the observable \begin{equation}
M(\mu)=R_Y^{\dagger}(\mu)\,Z\,R_Y(\mu)=\cos\mu\;Z-\sin\mu\;X ,
\qquad
\text{pull}_\f^t=-\langle M(\mu)\rangle_t .
\label{eq:mu}
\end{equation}
Finally, observe from Proposition~\ref{prop:orderblind} that at
$\mu=\pm\pi/2$ the frame channel is provably order-blind.

Now we introduce trainable entangler gates that interact with the narrative qubit $q_\n.$
After the per-turn block, each of $q_0,\dots,q_{n-1},q_\f$ acts as the control of a
controlled-$R_Y$ into $q_{\n}$, defined by 
\begin{equation}
U_e(\varphi)=\prod_{j=0}^{n}
\mathrm{C}R_Y^{\,q_j\to q_{n+1}}(\varphi_j),
\qquad
\text{pull}^t_{\mathrm{nar}}=-\langle Z_{n+1}\rangle_t,
\label{eq:ent}
\end{equation}
where $j=0,\dots,n-1$ indexes the bias qubits and $j=n$ the frame qubit, so
there are $n+1$ controls in all.
At $\varphi_j=\pi$ each gate is $\mathrm{CNOT}$. This entangler facilitates the strength of each bias's contribution to narrative coherence, which is a learned quantity.

The qubit and gate complexities scale linearly with the number of cognitive
biases. For $n$ biases the register holds $n+2$ qubits, the per-turn block
$U_r(t)$ contributes $n+2$ single-qubit rotations and no entangling gates, and
the readout block contributes one further rotation together with $n+1$
two-qubit controlled rotations. The circuit producing $s_t$ therefore comprises
$t(n+2)+1$ rotations and $n+1$ controlled rotations.

\begin{figure*}[ht]
\centering
\begin{subfigure}[b]{0.40\linewidth}
\centering
\begin{quantikz}[column sep=5pt]
\lstick{$q_0$: loss}      & \gate{R_Y(\Delta_0^t)} & \qw & \qw \\
\lstick{$q_1$: commit}    & \gate{R_Y(\Delta_1^t)} & \qw & \qw \\
\lstick{$q_2$: recip.}    & \gate{R_Y(\Delta_2^t)} & \qw & \qw \\
\lstick{$q_3$: auth.}     & \gate{R_Y(\Delta_3^t)} & \qw & \qw \\
\lstick{$q_4$: frame}     & \gate{R_Z(u_t)} & \gate{R_Y(v_t)} & \qw \\
\lstick{$q_5$: narrative} & \qw & \qw & \qw
\end{quantikz}
\caption{Per-turn block $U_r(t)$, applied once for each turn with that turn's
cue-derived, trainable angles \eqref{eq:gate}--\eqref{eq:frame}.}
\label{fig:circuit-turn}
\end{subfigure}
\hfill
\begin{subfigure}[b]{0.52\linewidth}
\centering
\adjustbox{max width=\linewidth}{%
\begin{quantikz}[column sep=4pt]
\lstick{$q_0$} & \qw & \ctrl{5} & \qw & \qw & \qw & \qw & \meter{} \\
\lstick{$q_1$} & \qw & \qw & \ctrl{4} & \qw & \qw & \qw & \meter{} \\
\lstick{$q_2$} & \qw & \qw & \qw & \ctrl{3} & \qw & \qw & \meter{} \\
\lstick{$q_3$} & \qw & \qw & \qw & \qw & \ctrl{2} & \qw & \meter{} \\
\lstick{$q_4$} & \gate{R_Y(\mu)} & \qw & \qw & \qw & \qw & \ctrl{1} & \meter{} \\
\lstick{$q_5$} & \qw & \gate{R_Y(\varphi_0)} & \gate{R_Y(\varphi_1)}
               & \gate{R_Y(\varphi_2)} & \gate{R_Y(\varphi_3)}
               & \gate{R_Y(\varphi_4)} & \meter{}
\end{quantikz}}
\caption{Readout block, applied to a copy of the state at every turn:
trainable basis rotation on $q_4$, then the trainable controlled-$R_Y$
aggregation into $q_5$. At $\varphi_j=\pi$ these are CNOTs.}
\label{fig:circuit-final}
\end{subfigure}
\caption{Gate-level structure of the parameterized quantum-cognitive
encoder. All angles shown are functions of the trainable parameters
$\Theta$ and of the current turn's cues.}
\label{fig:circuit-full}
\end{figure*}

\section{Joint Quantum-Classical Learning}
\label{sec:rl}

We next describe a procedure for jointly learning the PQC parameters through the RL agent. This coupling yields a hybrid quantum-classical learning framework that falls within the broader paradigm of quantum reinforcement learning. 
At every turn $t,$ the agent observes the learned risk feature vector $s_t$ (Eqn. (\ref{eq:state})) which form the Markov Decision Process (MDP) space for the RL framework. We define the action space as
$\mathcal{A}=\{\textsc{Wait},\textsc{Flag}\}$, where \textsc{Flag} terminates the
episode, \textsc{Wait} advances deterministically, so
$\Pr(s_{t+1}\mid s_t,\textsc{Wait})=1$ and the only stochasticity is in which
conversation is drawn. Rewards are defined as $r(s_t,\textsc{Wait})=0$ and
$$r(s_t,\textsc{Flag})=
\begin{cases}
-P_{\text{fa}} & \text{label}=0,\\[2pt]
B\,\beta^{\max(0,\,t-t^\ast)} & \text{label}=1,
\end{cases}$$
with a terminal penalty $P_{\text{miss}}$ for reaching the end of a fraud
conversation without flagging. Here $t^\ast$ is the labelled critical loss
turn, $\beta<1$ discounts late-but-correct detections, $P_{\text{fa}}$ is the
false-alarm penalty and $B$ the detection bonus, and labels $0$ and $1$ denote
benign and fraud conversations respectively. We use $P_{\text{fa}}=20$,
$P_{\text{miss}}=10$, $B=10$, $\beta=0.8$ and $\gamma=0.98$, with a $20/80$
fraud/benign base rate. We note that the base rate is not incidental to this
choice: at a $50/50$ rate with $P_{\text{fa}}=5$ the expected reward of blind
flagging is $0.5(+10)+0.5(-5)=+2.5>0$, so ``always flag at $t=0$'' is the
genuine optimum of the reward function, and Q-learning correctly finds it.

In our proposal, the RL agent approximates the action-value function linearly instead of a neural network based action-value function. Thus, the action-value function is defined as  
$Q_{\Theta,W}(s_t,a)=w_a\cdot s_t(\Theta)$
which  stays linear in the encoder's output. Thus our model can be interpreted as a PQC feature map
composed with a two-row linear head, and all nonlinearity and all representational
capacity beyond that head live in the circuit. Given the semi-gradient
temporal-difference (TD) error
\begin{equation}
\delta_t = r_t + \gamma\max_{a'}Q(s_{t+1},a') - Q(s_t,a_t),
\end{equation}
with the bootstrap target treated as constant, the classical head follows the
usual update $w_{a_t}\leftarrow w_{a_t}+\alpha_w\,\delta_t\,s_t$, and the same
error drives the circuit parameters,
\begin{equation}
\Theta \;\leftarrow\; \Theta
\;+\;\alpha_\Theta\,\delta_t\,\nabla_\Theta Q(s_t,a_t),
\qquad
\nabla_\Theta Q(s_t,a_t)=\sum_{k=1}^{n+2}\,[w_{a_t}]_k\;
\frac{\partial\,[s_t]_k}{\partial\Theta}.
\label{eq:jointupdate}
\end{equation} The details of the learning through gradients mechanism is described in the Supplementary Section \ref{Sec:gradient}. Algorithms~\ref{alg:encode} and \ref{alg:train} give the full procedure.

\begin{algorithm}[t]
\caption{PQC encoding of a conversation ($n$ biases, $n+2$ qubits)}
\label{alg:encode}
\begin{algorithmic}[1]
\Procedure{Encode}{$\{c_t\}_{t=1}^{T}$, $\Theta$}
  \State $\ket{\psi}\gets \ket{0}^{\otimes n}\otimes\ket{+}\otimes\ket{0}$;\quad
         $\theta_i\gets 0$ for $i=0,\dots,n-1$;\quad $\Phi\gets[\;]$
  \For{$t=1$ \textbf{to} $T$}
    \For{$i=0$ \textbf{to} $n-1$}\Comment{bias qubits}
      \State $g\gets\sigma(w_i c_{t,i}+b_i)$;\quad
             $\Delta\gets(\pi-\theta_i)\,g$;\quad $\theta_i\gets\theta_i+\Delta$
      \State $\ket{\psi}\gets \bigl(R_Y(\Delta)\text{ on }q_i\bigr)\ket{\psi}$
    \EndFor
    \State $u\gets\mathrm{softplus}(\rho_z)\,[c_{t,\f}]_+$;\quad
           $v\gets\mathrm{softplus}(\rho_y)\,[-c_{t,\f}]_+$
    \State $\ket{\psi}\gets \bigl(R_Y(v)R_Z(u)\text{ on }q_n\bigr)\ket{\psi}$
           \Comment{frame qubit, non-commuting}
    \State $\ket{\chi}\gets \bigl(R_Y(\mu)\text{ on }q_n\bigr)\ket{\psi}$
           \Comment{readout copy: separate execution}
    \For{$j=0$ \textbf{to} $n$}
      \State $\ket{\chi}\gets \mathrm{C}R_Y^{\,q_j\to q_{n+1}}(\varphi_j)\ket{\chi}$
    \EndFor
    \State $s_t\gets\bigl(-\langle\chi|Z_0|\chi\rangle,\dots,
            -\langle\chi|Z_{n+1}|\chi\rangle,\;1-e^{-t/\tau_0},\;1\bigr)$;
           \quad append $s_t$ to $\Phi$
  \EndFor
  \State \Return $\Phi$
\EndProcedure
\end{algorithmic}
\end{algorithm}

\begin{algorithm}[t]
\caption{Joint training of the PQC encoder and the RL detector}
\label{alg:train}
\begin{algorithmic}[1]
\Require labelled conversations $(\{c_t\},y,t^\ast)$; rates
  $\alpha_w,\alpha_\Theta$; discount $\gamma$; schedule $\epsilon$;
  constants $P_{\text{fa}},P_{\text{miss}},B,\beta$; anchor $\lambda$;
  warm-up $E_0$
\Ensure trained $\Theta$ and $W=(w_{\textsc{Wait}},w_{\textsc{Flag}})$
\State $W\sim\mathcal{N}(0,0.01^2)$;\quad $\Theta\gets\Theta_0$;\quad
       $\nu\gets 0$
\For{epoch $e=1,\dots,E$}
  \For{each conversation, shuffled}
    \State $\Phi\gets$\Call{Encode}{$\{c_t\},\Theta$};\quad
           $J\gets\partial\Phi/\partial\Theta$ by \eqref{eq:pshift} if $e>E_0$;
           \quad $G\gets 0$, $m\gets 0$
    \For{$t=1$ \textbf{to} $T$}
      \State $a_t\gets\epsilon$-greedy w.r.t.\ $Q(s_t,a)=w_a\cdot s_t$
      \State compute $r_t$ and the target; $\delta_t\gets$ TD error
      \State $\bar w\gets w_{a_t}$
             \Comment{retain the pre-update head weights}
      \State $w_{a_t}\gets w_{a_t}+\alpha_w\,\delta_t\,s_t$
      \If{$e>E_0$}
        \State $G\gets G+\delta_t\,\bar w_{1:n+2}^{\top}J_t$;\quad
               $m\gets m+1$
      \EndIf
      \State \textbf{if} $a_t=\textsc{Flag}$ \textbf{then break}
    \EndFor
    \If{$e>E_0$ \textbf{and} $m>0$}
      \State $G\gets G/m-\lambda(\Theta-\Theta_0)$;\quad
             $G\gets G/\max(1,\|G\|)$
      \State $\nu\gets 0.9\nu+0.1\,G^{\odot 2}$;\quad
             $\Theta\gets\Theta+\alpha_\Theta\,G\oslash(\sqrt{\nu}+10^{-6})$
    \EndIf
  \EndFor
\EndFor
\State \Return $\Theta, W$
\end{algorithmic}
\end{algorithm}

\section{Numerical Simulations with Synthetic Data}
\label{sec:testing}


Since we do not have access to real affinity/romance-fraud conversation
transcripts, as it is not reported in the literature to the best of our knowledge, we generate synthetic, explicitly
labelled conversations, described precisely below so the process is fully
reproducible. In our generative process, each conversation has $T$ turns
($T=14$ for the base classes). We divide $T$ into 6 equal-length stages
using boundaries $\text{stage\_bounds} = \lfloor\text{linspace}(0,T,7)\rfloor$
(so, e.g., $T=14$ gives stage boundaries at turns
$0,2,4,7,9,11,14$), mirroring the six-stage structure documented for
recovery/romance-investment scams in the fraud related cognitive psychology literature
\citep{whitty2013,pratkanis2005weapons}: contact, rapport, reciprocity
hook, authority-backed big ask, loss event, recovery pressure. Each stage
$s\in\{0,\ldots,5\}$ has a fixed base cue profile
$b^{(s)}=(b^{(s)}_{\text{loss}},b^{(s)}_{\text{commit}},
b^{(s)}_{\text{recip}},b^{(s)}_{\text{auth}},b^{(s)}_{\text{frame}})$,
given in Table~\ref{tab:profiles} for the \textsc{Fraud} class. 

For every turn $t$ falling in stage $s$ the cue values are drawn as
\begin{equation}
c_{t,i}=\mathrm{clip}\bigl(b^{(s)}_i+\varepsilon,\ \ell_i,u_i\bigr),
\qquad \varepsilon\sim\mathcal{N}(0,\sigma^2),
\label{eq:noisygen}
\end{equation}
independently for each of the five channels, with
$(\ell_i,u_i)=(0,1)$ for the bias channels and $(-1,1)$ for the frame channel.
The generator produces five classes, two positive and three negative.
\textsc{Fraud} follows Table~1 over $T=14$ turns with $\sigma=0.10$ on all
channels. \textsc{SubtleFraud} uses the same staged schedule over $T=18$ turns
with the bias profile scaled by $0.4$ and the frame profile by $0.5$, giving a
slower, lower-intensity escalation. Among the negatives, \textsc{Benign} uses a
single flat profile $(0.08,0.10,0.08,0.15,0.0)$ with $\sigma=0.05$;
\textsc{LegitUrgent} uses a flat profile $(0.35,0.12,0.05,0.20,0.0)$ with
$\sigma=0.08$, representing a bona fide financial conversation with genuinely
elevated loss and urgency discussion but no reciprocity hook, no authority
escalation and no reframing; and \textsc{PushySales} is a staged hard negative
that mimics the funnel's escalation of reciprocity ($0.60$ at the hook stage),
authority ($0.55$) and commitment ($0.60$) but contains neither a loss event
nor any reframing ($c_{t,f}\equiv 0$), with $\sigma=0.10$ on the bias channels
and $0.08$ on the frame channel. Positives are drawn evenly from
\textsc{Fraud} and \textsc{SubtleFraud}; negatives are $35\%$ \textsc{Benign},
$30\%$ \textsc{LegitUrgent} and $35\%$ \textsc{PushySales}. Consequently
\textsc{PushySales} overlaps substantially with fraud when only the four bias
channels are considered, while retaining a benign label. We use $300$ training
and $300$ test conversations at a $20\%$ fraud base rate and report results
averaged over three random seeds.

The initial values for $\rho_z=\rho_y$ are considered as $0.5413=\text{softplus}^{-1}(1)$, so $\zeta_z=\zeta_y=1,$ unit gain i.e. a maximal framing cue $|c_{t,\f}|=1$ produces a one-rotation  on that turn. The equal values encode no prior that one framing direction matters more than the other.


\begin{table}[t]
\centering
\caption{\textsc{Fraud} class base cue profile $b^{(s)}$ by stage (Eq.~\ref{eq:noisygen}).}
\label{tab:profiles}
\scriptsize
\setlength{\tabcolsep}{3pt}
\begin{tabular}{lrrrrr}
\toprule
Stage & loss & commit & recip. & auth. & frame \\
\midrule
Contact           & 0.05 & 0.05 & 0.05 & 0.10 & 0.0 \\
Rapport            & 0.10 & 0.25 & 0.10 & 0.20 & 0.0 \\
Reciprocity hook    & 0.10 & 0.30 & 0.65 & 0.30 & 0.0 \\
Authority + big ask & 0.20 & 0.45 & 0.30 & 0.60 & 0.0 \\
Loss event          & 0.70 & 0.55 & 0.15 & 0.40 & $-0.8$ \\
Recovery pressure   & 0.65 & 0.75 & 0.20 & 0.55 & $+0.8$ \\
\bottomrule
\end{tabular}
\end{table}

\subsection{Simulation results}
\label{sec:results}

First, we introduce te following standard notations used in the sequel. Denoting TP, FN, FP and TN as true positive (label=1, flagged at some run), false negative (label=1, never flagged), false positive (label=0, flagged at some run), true negative (label=0, never flagged), respectively, the False Positive Rule (FPR) is defined as FP/(FP+TN), F1 (score) is defined as F1=(2 $\cdot$ Precision $\cdot$ Recall)/ (Precision $+$ Recall), where Precision=TP/(TP+FP), Recall=TP/(TP+FN). 

In Table~\ref{tab:pqc-ablation} we report that joint training provides only a
marginal benefit when the encoder is already well calibrated. From the
cognitive initialization the trained and frozen encoders achieve comparable
performance ($F_1 = 0.934\pm0.019$ and $0.919\pm0.051$ respectively), a
difference of roughly one standard deviation that we do not read as decisive.
The difference is substantial under the mis-specified initialization, which
represents transfer to a domain with poorly calibrated parameters and rotation
angles. There the frozen encoder, i.e.\ RL-only learning, fails completely
($F_1 = 0.000$), missing all $62$ fraud conversations, whereas end-to-end
training from the same initialization achieves the best performance in the
table, $F_1 = 0.971\pm0.041$ with no false positives. Figure~\ref{fig:learning-curve} shows that the
two runs diverge only after the circuit is unfrozen. The same reversal produces no change at all in a frame channel that is prepared
in $\ket{0}$ and measured in the $Z$ basis, consistent with
Proposition~\ref{prop:orderblind}.


Further, we note from Table \ref{tab:pqc-ablation} that removing the frame qubit causes detection to collapse ($F_1=0.000$). With only the bias channels, and against
\textsc{PushySales} negatives that mimic the funnel's bias escalation, the
optimal policy under $P_{\text{fa}}=20$ at a $20\%$ base rate is to flag
nothing. Whereas, removing only the narrative qubit is comparatively benign. Figure~\ref{fig:pqc-traj} shows the reason after training, the
frame channel is the one that separates fraud from pushy sales, rising from
$-1$ to $-0.08$ after the loss event while remaining pinned at $-1$ for both
negative classes, whereas the loss and authority channels saturate on all
three. Detection timing is also sequence-dependent at the decision level (Fig.~~\ref{fig:pqc-order}).



\begin{table}[t]
\centering
\caption{Detection quality on the hard synthetic benchmark (300 test
conversations, 20\% fraud base rate, mean $\pm$ s.d.\ over three seeds).
Lateness is the mean flag turn minus the labelled critical loss turn, over
true positives; it is undefined for configurations that never flag.}
\label{tab:pqc-ablation}
\small
\setlength{\tabcolsep}{6pt}
\begin{tabular}{lcccc}
\toprule
Configuration & $F_1$ & FPR & Lateness & FN \\
\midrule
PQC, joint, cognitive init        & $0.934\pm0.019$ & $0.000$ & $+2.19$ & $7.7$ \\
PQC, frozen, cognitive init       & $0.919\pm0.051$ & $0.000$ & $+1.68$ & $9.0$ \\
Classical cue baseline            & $0.899\pm0.020$ & $0.000$ & $+0.99$ & $11.3$ \\
\midrule
PQC, frozen, mis-specified init   & $0.000\pm0.000$ & $0.000$ & --- & $62.0$ \\
\textbf{PQC, joint, mis-specified init} & $\mathbf{0.971\pm0.041}$ & $0.000$ & $+1.37$ & $3.3$ \\
\midrule
PQC, joint, no frame qubit        & $0.000\pm0.000$ & $0.000$ & --- & $62.0$ \\
PQC, joint, no narrative qubit    & $0.937\pm0.015$ & $0.000$ & $+2.19$ & $7.3$ \\
PQC, joint, bias qubits only      & $0.000\pm0.000$ & $0.000$ & --- & $62.0$ \\
\bottomrule
\end{tabular}
\end{table}

\begin{figure*}[ht]
    \centering
    \begin{subfigure}[t]{0.51\textwidth}
        \centering
        \includegraphics[width=\linewidth]{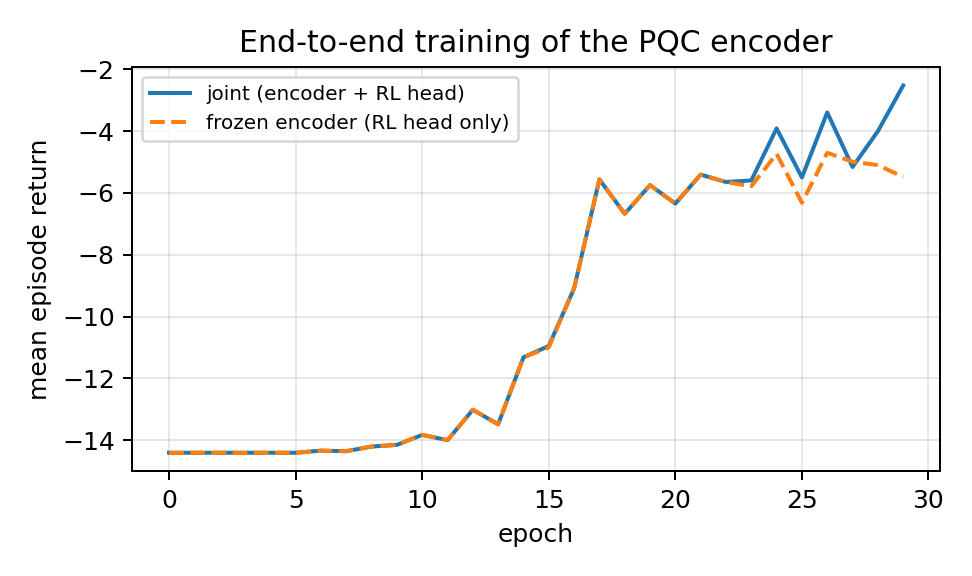}
        \caption{Joint versus frozen training }
        \label{fig:learning-curve}
    \end{subfigure}
    \hfill
    \begin{subfigure}[t]{0.48\textwidth}
        \centering
        \includegraphics[width=\linewidth]{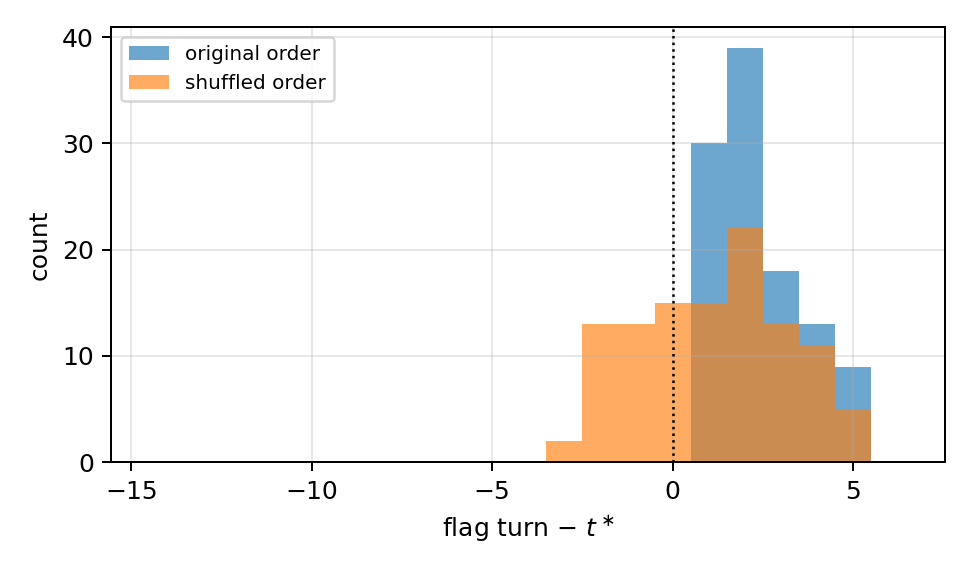}
        \caption{Effect of turn-order shuffling on detection timing}
        \label{fig:pqc-order}
    \end{subfigure}
    \caption{Training and order-sensitivity results. 
    (a) Joint training enables the mis-specified encoder to adapt after
    warm-up, whereas the frozen encoder fails to flag.
    (b) Detection timing under turn-order shuffling with cue
        content held fixed.}
    \label{fig:pqc-training-order}
\end{figure*}

\begin{figure*}[t]
\centering
\includegraphics[width=0.92\linewidth]{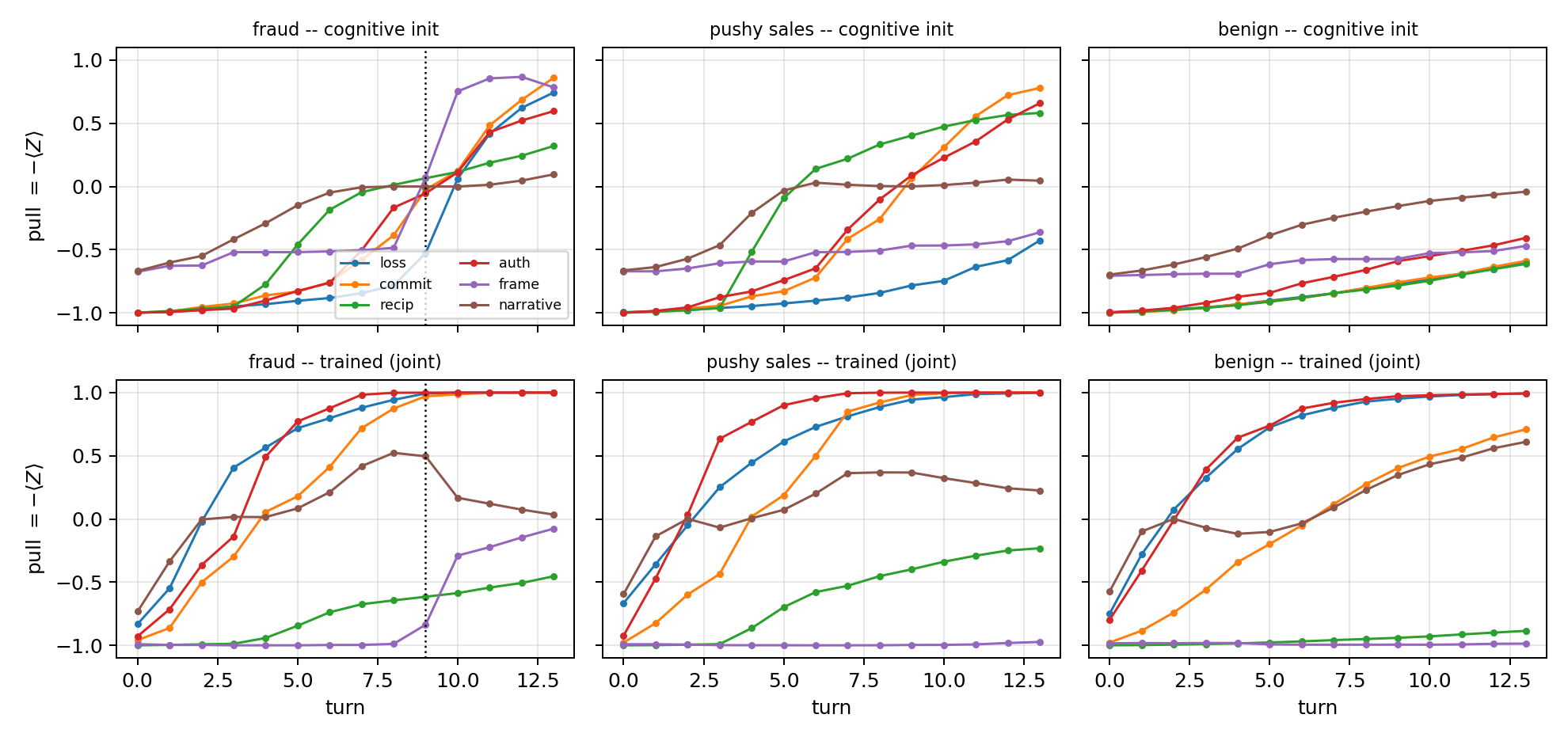}
\caption{Risk-feature trajectories at the cognitive initialization (top) and
after joint training (bottom), for a fraud conversation, a pushy-sales hard
negative, and a benign conversation. Training sharpens the frame channel
into the discriminator that separates fraud from pushy sales, while the loss
and authority channels saturate on all three classes.}
\label{fig:pqc-traj}
\end{figure*}

\section{Limitations and Future Work}
\label{sec:limits}

Several limitations of the present framework motivate future work. 
First, a quantum hardware implementation introduces additional cost and noise. Estimating $\langle Z_i\rangle$ requires finite-shot measurements, and measurement at turn $t$ destroys the quantum state; hence, a $T$-turn trajectory requires $T$ independent circuit executions, with total cost $O(T\times\text{shots})$ per conversation. Training further multiplies this cost by the $2|\mathcal{G}|$ shifted evaluations required by \eqref{eq:pshift}, while shot noise in the TD error $\delta_t$ compounds the intrinsic variance of TD learning. A systematic shot-noise analysis is therefore left for future work. Next, joint optimization remains fragile because the semi-gradient TD update operates on a moving feature map. Warm-up, anchoring, and adaptive gradient scaling are required for stability, with the anchor strength introducing a tradeoff between stability and adaptability, policy-gradient methods or fitted-Q learning with a target network may provide more robust alternatives. Finally, the present evaluation relies entirely on synthetic conversations constructed around the funnel structure encoded by the model. 
In addition, the NLP front end that maps raw text to per-turn cue vectors is currently specified but not independently implemented or validated, and errors in cue extraction would propagate directly to the circuit through the angle map in \eqref{eq:gate}.

\section{Conclusion}

We proposed a hybrid quantum-classical pipeline in which a structured
parameterized quantum circuit encodes manipulation-relevant cognitive biases
such as loss aversion, commitment, reciprocity, authority together with an
order-sensitive frame qubit and a narrative-coherence aggregator, and is
trained jointly with a classical reinforcement-learning agent that learns
when to flag a conversation. On synthetic conversations
with hard negatives, an encoder that is useless when frozen and at a
mis-specified initialization becomes the best-performing configuration when
trained end-to-end, and the frame qubit proves necessary for detection.

\bibliographystyle{plain}
\bibliography{Refference_QML-CRL}

@inproceedings{buetgolfouse2025brl,
  title={Behavioural Reinforcement Learning (Beyond Rationality: RL Under Investor Bias)},
  author={Buet-Golfouse, Francois and Shelley, Osian and Barbulescu, George-Octavian},
  booktitle={Proceedings of the 6th ACM International Conference on AI in Finance},
  pages={344--351},
  year={2025}
}

@incollection{kahneman2013prospect,
  title={Prospect theory: An analysis of decision under risk},
  author={Kahneman, Daniel and Tversky, Amos},
  booktitle={Handbook of the fundamentals of financial decision making: Part I},
  pages={99--127},
  year={2013},
  publisher={World Scientific}
}

@book{tversky1974,
  title={Judgment under uncertainty: Heuristics and biases},
  author={Tversky, Amos and Kahneman, Daniel},
  journal={Science, New Series},
  volume={185},
  number={4157},
  pages={1124--1131},
  publisher={ American Association for the Advancement of Scienc}
}

@article{tversky1983conjunction,
  title={Extensional versus intuitive reasoning: The conjunction fallacy in probability judgment.},
  author={Tversky, Amos and Kahneman, Daniel},
  journal={Psychological review},
  volume={90},
  number={4},
  pages={293},
  year={1983},
  publisher={American Psychological Association}
}

@book{busemeyer2012,
  title={Quantum models of cognition and decision},
  author={Busemeyer, Jerome R and Bruza, Peter D},
  year={2012},
  publisher={Cambridge University Press}
}

@article{busemeyer2011conjunction,
  title={A quantum theoretical explanation for probability judgment errors.},
  author={Busemeyer, Jerome R and Pothos, Emmanuel M and Franco, Riccardo and Trueblood, Jennifer S},
  journal={Psychological review},
  volume={118},
  number={2},
  pages={193},
  year={2011},
  publisher={American Psychological Association}
}

@book{cialdini2001,
  title={Influence: Science and practice},
  author={Cialdini, Robert and Cialdini, Robert B},
  year={2013},
  publisher={Allyn \& Bacon, Boston, MA.}
}

@article{perribrody2011,
  title={Birds of the same feather: the dangers of affinity fraud},
  author={Perri, Frank S and Brody, Richard G},
  journal={Journal of Forensic Studies in Accounting and Business},
  volume={3},
  number={1},
  pages={33--46},
  year={2011}
}

@article{whitty2013,
  title={The online romance scam: A serious cybercrime},
  author={Whitty, Monica T and Buchanan, Tom},
  journal={CyberPsychology, Behavior, and Social Networking},
  volume={15},
  number={3},
  pages={181--183},
  year={2012},
  publisher={SAGE Publications Sage CA: Los Angeles, CA}
}

@article{buchananwhitty2014,
  title={The online dating romance scam: causes and consequences of victimhood},
  author={Buchanan, Tom and Whitty, Monica T},
  journal={Psychology, Crime \& Law},
  volume={20},
  number={3},
  pages={261--283},
  year={2014},
  publisher={Taylor \& Francis}
}

@article{leafischerevans2009,
  title={The psychology of scams: Provoking and committing errors of judgement},
  author={Lea, Stephen EG and Fischer, Peter and Evans, Kath M},
  year={2009},
  publisher={University of Exeter}
}

@article{pratkanis2005weapons,
  title={Weapons of Fraud: A Source Book for Fraud Fighters.},
  author={Anthony Pratkanis and Doug Shadel},
  year={2005},
  publisher={AARP Washington.}
  
}

@misc{securities2013affinity,
  title={Affinity fraud: how to avoid investment scams that target groups},
  author={Securities, US and Exchange Commission and others},
  year={2013}
}

@article{coluccia2020online,
  title={Online romance scams: Relational dynamics and psychological characteristics of the victims and scammers. A scoping review},
  author={Coluccia, Anna and Pozza, Andrea and Ferretti, Fabio and Carabellese, Fulvio and Masti, Alessandra and Gualtieri, Giacomo},
  journal={Clinical practice and epidemiology in mental health: CP \& EMH},
  volume={16},
  pages={24},
  year={2020}
}

@article{dearden2025mechanisms,
  title={Mechanisms of affinity fraud victimization},
  author={Dearden, Thomas and Jasperson, Jill O and Miller, Ronald Mellado},
  journal={Journal of Financial Crime},
  volume={32},
  number={1},
  pages={64--76},
  year={2025},
  publisher={Emerald Publishing Limited}
}

@article{busemeyer2015quantum,
  title={What is quantum cognition, and how is it applied to psychology?},
  author={Busemeyer, Jerome R and Wang, Zheng},
  journal={Current Directions in Psychological Science},
  volume={24},
  number={3},
  pages={163--169},
  year={2015},
  publisher={Sage Publications Sage CA: Los Angeles, CA}
}

@article{widdows2023quantum,
  title={Quantum circuit components for cognitive decision-making},
  author={Widdows, Dominic and Rani, Jyoti and Pothos, Emmanuel},
  journal={arXiv preprint arXiv:2302.03012},
  year={2023}
}

@book{nielsen2001quantum,
  title={Quantum computation and quantum information},
  author={Nielsen, Michael A and Chuang, Isaac L},
  volume={2},
  year={2001},
  publisher={Cambridge university press Cambridge}
}

@article{ruggeri2023persistence,
  title={The persistence of cognitive biases in financial decisions across economic groups},
  author={Ruggeri, Kai and Ashcroft-Jones, Sarah and Abate Romero Landini, Giampaolo and Al-Zahli, Narjes and Alexander, Natalia and Andersen, Mathias Houe and Bibilouri, Katherine and Busch, Katharina and Cafarelli, Valentina and Chen, Jennifer and others},
  journal={Scientific Reports},
  volume={13},
  number={1},
  pages={10329},
  year={2023},
  publisher={Nature Publishing Group UK London}
}

@article{de2024financial,
  title={Financial scarcity and cognitive performance: A meta-analysis},
  author={de Almeida, Filipa and Scott, Ian J and Soro, Jer{\^o}nimo C and Fernandes, Daniel and Amaral, Andr{\'e} R and Catarino, Mafalda L and Ar{\^e}de, Andr{\'e} and Ferreira, M{\'a}rio B},
  journal={Journal of Economic Psychology},
  volume={101},
  pages={102702},
  year={2024},
  publisher={Elsevier}
}

@article{ruggeri2024financial,
  title={Financial decision-making, income, cognitive biases: The impact of economic systems and environments on behavior in six countries},
  author={Ruggeri, Kai and Abate Romero Landini, Giampaolo and Busch, Katharina and Cafarelli, Valentina and Doubravov{\'a}, Barbora and Gurol, Deniz and Miralem, Melika and Nilsson, Fredrik and Ashcroft-Jones, Sarah and Stock, Friederike},
  year={2024},
  publisher={OSF}
}

@article{hammad2026cognitive,
  title={Cognitive Biases and Portfolio Decisions: A Neurofinance Perspective on Investor Irrationality},
  author={Hammad, VP Syed Abdullahil and Marimuthu, M},
  journal={Scriptora International Journal of Research and Innovation (SIJRI)},
  pages={14--23},
  year={2026}
}

@article{tkacz2025exploration,
  title={Exploration of Behavioral Patterns and Cognitive Biases among Stock Market Investors},
  author={Tkacz, Maciej and Olszak, Celina and Zurada, Jozef and Weichbroth, Pawe{\l}},
  year={2025}
}

@article{gabhane2023behavioral,
  title={Behavioral finance: exploring the influence of cognitive biases on investment decisions},
  author={Gabhane, Dinesh and Sharma, A and Mukherjee, Rupam},
  journal={Boletin de Literatura Oral-The Literary Journal},
  volume={10},
  number={1},
  pages={3133--3141},
  year={2023}
}

@article{cardie2000cognitive,
  title={A cognitive bias approach to feature selection and weighting for case-based learners},
  author={Cardie, Claire},
  journal={Machine Learning},
  volume={41},
  number={1},
  pages={85--116},
  year={2000},
  publisher={Springer}
}

@inproceedings{kaleem2017cognitive,
  title={A cognitive approach for attribute selection in internet dataset},
  author={Kaleem, Danish and Ferens, Ken},
  booktitle={2017 IEEE 16th International Conference on Cognitive Informatics \& Cognitive Computing (ICCI* CC)},
  pages={319--328},
  year={2017},
  organization={IEEE}
}

@article{abbas2021power,
  title={The power of quantum neural networks},
  author={Abbas, Amira and Sutter, David and Zoufal, Christa and Lucchi, Aur{\'e}lien and Figalli, Alessio and Woerner, Stefan},
  journal={Nature computational science},
  volume={1},
  number={6},
  pages={403--409},
  year={2021},
  publisher={Nature Publishing Group US New York}
}

@article{van2018deep,
  title={Deep reinforcement learning and the deadly triad},
  author={Van Hasselt, Hado and Doron, Yotam and Strub, Florian and Hessel, Matteo and Sonnerat, Nicolas and Modayil, Joseph},
  journal={arXiv preprint arXiv:1812.02648},
  year={2018}
}

@article{meyer2022survey,
  title={A survey on quantum reinforcement learning},
  author={Meyer, Nico and Ufrecht, Christian and Periyasamy, Maniraman and Scherer, Daniel D and Plinge, Axel and Mutschler, Christopher},
  journal={arXiv preprint arXiv:2211.03464},
  year={2022}
}

@article{chen2026quantum,
  title={Quantum reinforcement learning: Concepts and applications},
  author={Chen, Samuel Yen-Chi},
  journal={Quantum Computational AI},
  pages={3--23},
  year={2026},
  publisher={Elsevier}
}

@article{franco2009conjunction,
  title={The conjunction fallacy and interference effects},
  author={Franco, Riccardo},
  journal={Journal of Mathematical Psychology},
  volume={53},
  number={5},
  pages={415--422},
  year={2009},
  publisher={Elsevier}
}

@article{wang2006order,
  title={The order effect in human abductive reasoning: An empirical and computational study},
  author={Wang, Hongbin and Johnson, Todd R and Zhang, Jiajie},
  journal={Journal of Experimental \& Theoretical Artificial Intelligence},
  volume={18},
  number={2},
  pages={215--247},
  year={2006},
  publisher={Taylor \& Francis}
}

@article{trueblood2011quantum,
  title={A quantum probability account of order effects in inference},
  author={Trueblood, Jennifer S and Busemeyer, Jerome R},
  journal={Cognitive science},
  volume={35},
  number={8},
  pages={1518--1552},
  year={2011},
  publisher={Wiley Online Library}
}

@article{khrennikova2014order,
  title={Order effect in a study on US voters’ preferences: quantum framework representation of the observables},
  author={Khrennikova, Polina},
  journal={Physica Scripta},
  volume={163},
  number={1},
  pages={014010},
  year={2014},
  publisher={IOP Publishing}
}

\newpage 
\section{Supplementary}

\subsection{Mathematical Foundations}


Recall that an $n$-qubit state is $\ket{\psi}\in\mathbb{C}^{2^n}$. The Pauli
matrices of order $2$ are
$$X=\bmatrix{0&1\\1&0},\qquad
Y=\bmatrix{0&-i\\i&0},\qquad
Z=\bmatrix{1&0\\0&-1},$$
along with the identity $I_2=\begin{pmatrix}1&0\\0&1\end{pmatrix}$. Then the single-qubit
rotations that we consider in this paper are:
$R_Y(\theta)=e^{-i\theta Y/2}=\cos(\theta/2)I-i\sin(\theta/2)Y$, and
similarly for $R_Z$. The observables we consider as the Pauli $Z$ matrix and $\langle Z\rangle=\langle\psi|Z|\psi\rangle\in\mathbb{R}$ denotes the expected value of the measurement outcomes, see \cite{nielsen2001quantum} for basics of quantum computation. 

Now we have the following propositions, whose proofs are computational and easy to verify. The Lie bracket for two operators $A,B$ is defined as $[A, \,B]=AB-BA,$ which is the zero matrix only when they commute. Proposition \ref{prop:noncommute} describes the frame-order dependence, whereas Proposition \ref{prop:frechet} shows that no joint measurement can produce the conjunction fallacy.

\begin{proposition}
\label{prop:noncommute}
$[R_Y(\theta),R_Z(\phi)] = -2i\sin(\theta/2)\sin(\phi/2)\,X$, which vanishes
only for trivial rotation angles; for any other $\theta,\phi$,
$R_Y(\theta)R_Z(\phi)\neq R_Z(\phi)R_Y(\theta)$.
\end{proposition}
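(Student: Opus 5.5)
Expand both rotations in the Pauli basis, $R_Y(\theta)=c_\theta I - i s_\theta Y$ and $R_Z(\phi)=c_\phi I - i s_\phi Z$, with the shorthand $c_\theta=\cos(\theta/2)$, $s_\theta=\sin(\theta/2)$, and similarly for $\phi$. Then compute both products $R_Y(\theta)R_Z(\phi)$ and $R_Z(\phi)R_Y(\theta)$ by bilinearity. The terms $c_\theta c_\phi I$, $-ic_\theta s_\phi Z$ and $-is_\theta c_\phi Y$ are identical in the two orders, because each contains a factor of $I$. They therefore cancel in the commutator, and
\[
[R_Y(\theta),R_Z(\phi)] = (-i)^2 s_\theta s_\phi\,(YZ-ZY) = -s_\theta s_\phi\,[Y,Z].
\]

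The only nontrivial ingredient is the Pauli relation $[Y,Z]=2iX$, which follows from $YZ=iX$ and $ZY=-iX$. Both identities can be checked directly from the matrices given above. Substituting gives
\[
[R_Y(\theta),R_Z(\phi)] = -2i\sin(\theta/2)\sin(\phi/2)\,X,
\]
as claimed. The main point requiring care is the sign bookkeeping: the factors $(-i)(-i)=-1$ and $[Y,Z]=+2iX$ must be tracked correctly. I would confirm the sign by one explicit $2\times 2$ multiplication at, for example, $\theta=\phi=\pi$. There $R_Y(\pi)=-iY$ and $R_Z(\pi)=-iZ$, so the commutator is $-[Y,Z]=-2iX$, which matches the formula.

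For the vanishing statement, note that $X\neq 0$. The commutator is therefore zero if and only if $\sin(\theta/2)\sin(\phi/2)=0$, that is, iff $\theta\in 2\pi\mathbb{Z}$ or $\phi\in 2\pi\mathbb{Z}$. At those angles the corresponding rotation equals $\pm I$, which is what "trivial rotation angles" means here: the identity up to a global sign. For every other pair, the commutator is a nonzero multiple of $X$, so $R_Y(\theta)R_Z(\phi)\neq R_Z(\phi)R_Y(\theta)$.
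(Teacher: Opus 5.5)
Your proof is correct and matches the paper's approach. The paper gives no written proof and calls it a direct computation from $R_Y(\theta)=\cos(\theta/2)I-i\sin(\theta/2)Y$ and the analogous form of $R_Z$; your Pauli-basis expansion with $[Y,Z]=2iX$, plus the observation that the commutator vanishes exactly when $\theta\in 2\pi\mathbb{Z}$ or $\phi\in 2\pi\mathbb{Z}$ (where one factor is $\pm I$), is precisely that verification.
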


\begin{proposition}
\label{prop:frechet}
For any two qubits $q_A,q_B$ in any joint state,
$P(A\wedge B)\le\min(P(A),P(B))$, since Born-rule outcome probabilities
$p_{00},p_{01},p_{10},p_{11}\ge 0$ summing to $1$ give
$P(A\wedge B)=p_{11}\le p_{10}+p_{11}=P(A)$, and symmetrically for $B$,
regardless of entanglement.
\end{proposition}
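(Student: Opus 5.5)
The statement to prove is Proposition~\ref{prop:frechet}. The plan is to reduce the claim to the classical joint distribution that the Born rule induces on the two computational-basis outcomes, and then read off the Fréchet upper bound from nonnegativity. Throughout, I interpret $A$ and $B$ as the events that $q_A$ and $q_B$, respectively, are found in $\ket{1}$.

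First I will fix the setting. Let $\rho$ be an arbitrary (possibly mixed, possibly entangled) state on the whole register. Let $\Pi_{ab}=\ket{a}\!\bra{a}_A\otimes\ket{b}\!\bra{b}_B\otimes I_{\mathrm{rest}}$ for $a,b\in\{0,1\}$. Define $p_{ab}=\mathrm{Tr}(\rho\,\Pi_{ab})$. These four projectors are mutually orthogonal and sum to the identity, so I need to verify:
\begin{itemize}
\item $p_{ab}\ge 0$, because $\rho\succeq 0$ and $\Pi_{ab}\succeq 0$;
\item $\sum_{ab}p_{ab}=\mathrm{Tr}\rho=1$.
\end{itemize}
This is the only place where quantum structure enters. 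I will stress that entanglement affects the values of the $p_{ab}$, but not these two properties.

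Next I will identify the marginals. The single-qubit projector $\ket{1}\!\bra{1}_A\otimes I$ equals $\Pi_{10}+\Pi_{11}$, so $P(A)=p_{10}+p_{11}$. Symmetrically, $P(B)=p_{01}+p_{11}$. The joint event is $P(A\wedge B)=p_{11}$, which is meaningful because the two local projectors act on distinct tensor factors and therefore commute. Nonnegativity of $p_{10}$ and $p_{01}$ then gives $p_{11}\le P(A)$ and $p_{11}\le P(B)$, hence $P(A\wedge B)\le\min(P(A),P(B))$.

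The main obstacle is conceptual rather than computational: justifying that the joint event is well defined. I will remark that the argument covers exactly the case of commuting (tensor-product) measurements. This is why a joint readout of two distinct qubits cannot produce the conjunction fallacy, and why the narrative qubit is used as a coherence aggregate rather than as a violation mechanism. As an optional aside, for a rotated basis on either qubit the same argument applies after conjugating by local unitaries.
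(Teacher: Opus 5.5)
Your proposal is correct and follows the paper's own argument. The paper likewise reads the joint Born-rule distribution $p_{00},p_{01},p_{10},p_{11}\ge 0$ and uses nonnegativity to get $p_{11}\le p_{10}+p_{11}=P(A)$, and symmetrically for $B$. Your density-operator and projector formulation only makes explicit the setting the paper leaves implicit: mixed states, the rest of the register, and the fact that commuting local projectors make the joint event well defined.
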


Proposition~\ref{prop:frechet} shows that the conjunction fallacy cannot be reproduced by a
\textit{single joint measurement} of two qubits, however they are entangled. Indeed, the
Fr\'echet bound follows from non-negativity of the Born-rule outcome
probabilities alone. This does not contradict the quantum-cognitive account of
the fallacy, which obtains it from \textit{sequential incompatible} measurements
rather than a joint one~\cite{busemeyer2011conjunction}. It does, however,
determine our design: the narrative qubit is used not to violate the bound but
as an order-insensitive coherence aggregate, while order sensitivity is carried
entirely by the frame qubit through Proposition~\ref{prop:noncommute}.

Next, we show that the expected value $\langle Z_i\rangle$ for any bias qubit $i$ after applying a rotation gate $R_Y(\theta)$ is given by $\cos\theta$ for the initial state $\ket{0}.$ Note that $\ket{\psi}=R_Y(\theta)\ket{0}=\cos\frac{\theta}{2} \ket{0} +\sin\frac{\theta}{2}\ket{1}.$ Then \begin{equation}\label{eqn:expZ}
\langle Z\rangle=\langle\psi|Z|\psi\rangle=\cos^2\frac{\theta}{2}-\sin^2\frac{\theta}{2}=\cos\theta.\end{equation} Then applying a sequence of $R_Y$ gates, only amplify the rotation angle, and hence it remains cosine of the total rotation angle. Then note that $-\langle Z_i\rangle=-\cos\theta_i$ is a strictly monotonic function on $[0, \, \pi].$

Next we have the following Proposition, which establishes the order-blind of the measurement bases.

\begin{proposition}
\label{prop:orderblind}
Let the frame qubit be prepared in $\ket{+}$ and let one loss-framing event
$R_Z(u)$ and one recovery-framing event $R_Y(v)$ occur, in either order.
Write $\langle\cdot\rangle_{ZY}$ for the state $R_Y(v)R_Z(u)\ket{+}$ and
$\langle\cdot\rangle_{YZ}$ for $R_Z(u)R_Y(v)\ket{+}$. Then
\begin{equation}
\langle X\rangle_{ZY}=\langle X\rangle_{YZ}=\cos u\cos v ,
\qquad
\langle Z\rangle_{ZY}-\langle Z\rangle_{YZ}=\sin v\,(1-\cos u),
\end{equation}
and consequently, for the observable $M(\mu)$ of \eqref{eq:mu},
\begin{equation}
\bigl|\langle M(\mu)\rangle_{ZY}-\langle M(\mu)\rangle_{YZ}\bigr|
=\bigl|\cos\mu\bigr|\;\sin v\,(1-\cos u).
\label{eq:orderblind}
\end{equation}
The order effect is therefore  zero at $\mu=\pm\pi/2$ and
maximal at $\mu\in\{0,\pi\}$. Moreover, for a frame qubit prepared in
$\ket{0}$ and measured in the $Z$ basis, any prefix consisting solely of
$R_Z$ rotations leaves $\langle Z\rangle$ unchanged, so such a frame channel
cannot register loss-framing that is not followed by recovery-framing.
\end{proposition}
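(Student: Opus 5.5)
The plan is to work entirely in the Bloch-sphere picture of the frame qubit, which turns every claim into a short computation with $3\times 3$ rotation matrices. I will use the standard fact that conjugation by $R_Z(u)$ rotates the Bloch vector $(\langle X\rangle,\langle Y\rangle,\langle Z\rangle)$ by angle $u$ about the $z$-axis, and conjugation by $R_Y(v)$ rotates it by angle $v$ about the $y$-axis. This follows from $R_Z(u)^\dagger X R_Z(u)=\cos u\,X-\sin u\,Y$ and its analogues, or directly from Proposition~\ref{prop:noncommute}-style expansions of $e^{-i\theta P/2}$. The preparation $\ket{+}$ has Bloch vector $(1,0,0)$.

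For the order $ZY$ (first $R_Z(u)$, then $R_Y(v)$):
\begin{itemize}
\item $R_Z(u)$ sends $(1,0,0)$ to $(\cos u,\sin u,0)$.
\item The $y$-rotation, acting as $x'=x\cos v+z\sin v$, $z'=-x\sin v+z\cos v$, $y'=y$, then gives $(\cos u\cos v,\ \sin u,\ -\cos u\sin v)$.
\end{itemize}
For the order $YZ$:
\begin{itemize}
\item $R_Y(v)$ first gives $(\cos v,0,-\sin v)$.
\item $R_Z(u)$ then gives $(\cos u\cos v,\ \sin u\cos v,\ -\sin v)$.
\end{itemize}
Reading off the first components yields $\langle X\rangle_{ZY}=\langle X\rangle_{YZ}=\cos u\cos v$. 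Subtracting the third components yields $\langle Z\rangle_{ZY}-\langle Z\rangle_{YZ}=-\cos u\sin v+\sin v=\sin v\,(1-\cos u)$.

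Next I substitute into \eqref{eq:mu}, $M(\mu)=\cos\mu\,Z-\sin\mu\,X$, using linearity of expectation. The $X$-contributions cancel exactly, so the difference of $\langle M(\mu)\rangle$ between the two orders is $\cos\mu\,\sin v\,(1-\cos u)$, and taking absolute values gives \eqref{eq:orderblind}. One point needs care here, and it is the only place I expect a real obstacle: the displayed identity drops the absolute value on $\sin v$. That is valid in the regime $v\in[0,\pi]$, which is natural since $v_t=\zeta_y[-c_{t,\f}]_+\ge 0$ with moderate gain. In general the correct form is $|\cos\mu|\,|\sin v|\,(1-\cos u)$, and I will state that caveat. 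The factor $1-\cos u\ge 0$ needs no such treatment. The vanishing at $\mu=\pm\pi/2$ and maximality at $\mu\in\{0,\pi\}$ then follow at once from $|\cos\mu|$.

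For the final claim, note that $R_Z(u)$ is diagonal in the computational basis, so it commutes with $Z$. Hence $\ket{0}$ stays an eigenstate up to a phase, and $\langle Z\rangle=1$ after any prefix of $R_Z$ rotations. Equivalently, $z$-rotations fix the north pole of the Bloch sphere. Such a channel therefore cannot register loss-framing that is not followed by a $Y$-rotation.
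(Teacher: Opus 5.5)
Your proposal is correct and follows the paper's own proof: rotate the Bloch vector $(1,0,0)$ of $\ket{+}$ in both orders, compare the $x$ and $z$ components, substitute into $\langle M(\mu)\rangle=\cos\mu\,\langle Z\rangle-\sin\mu\,\langle X\rangle$, and use $R_Z(u)\ket{0}=e^{-iu/2}\ket{0}$ for the final claim. Your caveat is also right and is a point the paper's proof passes over: \eqref{eq:orderblind} tacitly assumes $\sin v\ge 0$, which holds when $v\in[0,\pi]$ (for instance $v=\zeta_y[-c_{t,\f}]_+$ with $\zeta_y\le\pi$), and in general the right-hand side should read $|\cos\mu|\,|\sin v|\,(1-\cos u)$.
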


\begin{proof}
On the Bloch sphere $\ket{+}$ is $(1,0,0)$. $R_Z(u)$ rotates by $u$ about
$z$, $R_Y(v)$ by $v$ about $y$. Applying $R_Z$ then $R_Y$ gives
$(\cos u\cos v,\ \sin u,\ -\cos u\sin v)$; applying $R_Y$ then $R_Z$ gives
$(\cos v\cos u,\ \cos v\sin u,\ -\sin v)$. The $x$ components agree and the
$z$ components differ by $\sin v-\cos u\sin v$. Substituting into
$\langle M(\mu)\rangle=\cos\mu\,\langle Z\rangle-\sin\mu\,\langle X\rangle$
gives \eqref{eq:orderblind}. The final claim is immediate from
$R_Z(u)\ket{0}=e^{-iu/2}\ket{0}$.
\end{proof}

Proposition~\ref{prop:orderblind} is the reason the readout angle $\mu$ is a
trained parameter and is initialized at $-\pi/4$ rather than at the
natural-looking $-\pi/2$ (which measures $X$ exactly, and is precisely the
blind point). 


The following Proposition provides a closed form expression of the narrative qubit's readout.

\begin{proposition}
\label{prop:closedform}
Every gate of the per-turn block acts on a single qubit, so
$\ket{\phi(t)}$ is a product state $\bigotimes_j\ket{\phi_j(t)}$. Let
$p_j=|\langle 1|\phi_j(t)\rangle|^2$ for $j=0,\dots,n$ (for $j=n$, after the
readout rotation $R_Y(\mu)$). Then the gates of $U_e(\varphi)$ in
\eqref{eq:ent} pairwise commute, each leaves its own control's
$\langle Z_j\rangle$ invariant, and
\begin{equation}
\langle Z_{n+1}\rangle
=\mathrm{Re}\prod_{j=0}^{n}\Bigl(1+p_j\bigl(e^{i\varphi_j}-1\bigr)\Bigr).
\label{eq:narrclosed}
\end{equation}
\end{proposition}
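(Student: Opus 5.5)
The plan is to exploit the block structure of controlled rotations with respect to the computational basis of the controls, and then reduce the narrative readout to a classical average over control bitstrings.

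\emph{Product structure and commutation.} First I will observe that $U_r(t)\cdots U_r(1)$ and the readout rotation $R_Y(\mu)$ on $q_n$ consist only of single-qubit gates. Hence the state entering $U_e(\varphi)$ is $\bigotimes_{j=0}^{n}\ket{\phi_j(t)}\otimes\ket{0}_{n+1}$. Next I will write each gate as
\[
\mathrm{C}R_Y^{\,q_j\to q_{n+1}}(\varphi_j)=\ket{0}\!\bra{0}_j\otimes I+\ket{1}\!\bra{1}_j\otimes R_Y(\varphi_j),
\]
with identity on the remaining qubits.
\begin{itemize}
\item \textbf{Commutation.} Two such gates have distinct controls and a common target. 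Their product expands into terms of the form (projector on $j$)$\otimes$(projector on $k$)$\otimes$(product of $R_Y$'s on $q_{n+1}$). Since $R_Y(\alpha)R_Y(\beta)=R_Y(\alpha+\beta)=R_Y(\beta)R_Y(\alpha)$, every term is symmetric in the two gates, so they commute.
\item \textbf{Invariance of $\langle Z_j\rangle$.} Each gate is block diagonal in the $Z_j$ eigenbasis of its control, so it commutes with $Z_j$. The same holds for the other gates in $U_e$, which act trivially on $q_j$. Therefore $U_e^\dagger Z_j U_e=Z_j$.
\end{itemize}

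\emph{Expansion over control bitstrings.} Expand each control as $\ket{\phi_j}=a_j\ket{0}+b_j\ket{1}$ with $|b_j|^2=p_j$. The full pre-entangler state is then $\sum_{x\in\{0,1\}^{n+1}}\alpha_x\ket{x}\otimes\ket{0}$, where $|\alpha_x|^2=\prod_j p_j^{x_j}(1-p_j)^{1-x_j}=:P(x)$. By the commutation step, $U_e$ maps $\ket{x}\ket{0}$ to $\ket{x}\otimes R_Y\bigl(\sum_j\varphi_jx_j\bigr)\ket{0}$.

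\emph{Evaluating the narrative readout.} Since $Z_{n+1}$ acts trivially on the controls and $\langle x|x'\rangle=\delta_{xx'}$, all cross terms drop. Applying \eqref{eqn:expZ} to each surviving term gives
\[
\langle Z_{n+1}\rangle=\sum_x P(x)\cos\Bigl(\textstyle\sum_j\varphi_jx_j\Bigr)=\mathrm{Re}\,\sum_x P(x)\,e^{i\sum_j\varphi_jx_j}.
\]
The last sum is the characteristic function of independent Bernoulli$(p_j)$ variables evaluated at the $\varphi_j$. It therefore factorizes as $\prod_j\bigl(1-p_j+p_je^{i\varphi_j}\bigr)$, which is exactly \eqref{eq:narrclosed}.

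\emph{Main obstacle.} The only delicate point is justifying that the coherences between different control bitstrings do not contribute to $\langle Z_{n+1}\rangle$. This requires stating clearly that $Z_{n+1}$ is $I$ on the controls, so that orthogonality of the $\ket{x}$ kills every off-diagonal term. It also requires tracking that, for $j=n$, $p_j$ is taken after the rotation $R_Y(\mu)$ and not before. Everything else is a routine expansion.
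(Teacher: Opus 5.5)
Your proposal is correct and follows the paper's proof essentially step for step. Like the paper, you use the block-diagonal form of the controlled gates to get commutation and $Z_j$-invariance, then average $\cos\bigl(\sum_j z_j\varphi_j\bigr)$ over control bitstrings with product-state weights and factorize via $\cos x=\mathrm{Re}\,e^{ix}$; your explicit orthogonality argument for discarding cross terms simply makes rigorous what the paper's ``conditioned on the control bitstring'' phrasing leaves implicit.
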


\begin{proof}
All the gates of $U_e$ share the target $q_{n+1}$ and rotate it about the
same axis, and their controls are diagonal, so they commute pairwise; a
controlled gate is diagonal on its control, hence leaves $\langle Z_j\rangle$
unchanged. Conditioned on the control bitstring $z\in\{0,1\}^{n+1}$, which
occurs with probability $\prod_j p_j^{z_j}(1-p_j)^{1-z_j}$ because the
controls are in a product state, the target evolves as
$R_Y\bigl(\sum_j z_j\varphi_j\bigr)\ket{0}$ and contributes
$\cos\bigl(\sum_j z_j\varphi_j\bigr)$ to $\langle Z_{n+1}\rangle$. Averaging
and writing $\cos x=\mathrm{Re}\,e^{ix}$ factorises the sum over $z$ into the
product \eqref{eq:narrclosed}.
\end{proof}

\begin{remark}
\label{cor:sim}
By Proposition~\ref{prop:closedform}, the full feature trajectory
$\{s_t\}_{t=1}^{T}$ can be evaluated in $O(nT)$ classical arithmetic
operations, without forming the $2^{n+2}$-dimensional statevector. The role
of the quantum formalism here is representational. It supplies a
non-commutative, interference-based inductive bias matched to documented
judgement phenomena not computational. Quantum hardware implementation of the proposal would invite additional challenges, which is left as future work.
\end{remark}

Next we have the following propositions, which shows that the entangler defined by CNOT gates is stationary.

\begin{proposition}
\label{prop:cnotstat}
With $\langle Z_{n+1}\rangle$ as in \eqref{eq:narrclosed} and
$f_l=1+p_l(e^{i\varphi_l}-1)$,
\[
\frac{\partial \langle Z_{n+1}\rangle}{\partial \varphi_j}
=\mathrm{Re}\Bigl[i\,p_j e^{i\varphi_j}\prod_{l\neq j} f_l\Bigr].
\]
At $\varphi=\pi\mathbf{1}$ every $f_l=1-2p_l$ is real, so the bracket is purely
imaginary and the derivative vanishes for every $j$, every $t$, and every input.
The point $\varphi=\pi\mathbf{1}$ is thus a critical point of the narrative
readout and the second derivative is strictly negative. Consequently, it is a local maximum
of the narrative contrast.
\end{proposition}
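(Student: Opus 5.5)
The plan is to work directly from the closed form \eqref{eq:narrclosed} of Proposition~\ref{prop:closedform}. In that formula the $p_l$ depend only on the per-turn block and on $\mu$, so they do not depend on $\varphi$. Write $F(\varphi)=\prod_{l=0}^{n} f_l$ with $f_l=1+p_l(e^{i\varphi_l}-1)$, so that $\langle Z_{n+1}\rangle=\mathrm{Re}\,F$. Each $\varphi_j$ enters only through $f_j$, and $\partial f_j/\partial\varphi_j=i p_j e^{i\varphi_j}$. Since $\mathrm{Re}$ commutes with real differentiation, the product rule gives the stated first derivative immediately.

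For stationarity, I substitute $\varphi=\pi\mathbf 1$. Then $e^{i\pi}=-1$, so $f_l=1-2p_l\in\mathbb R$, and the bracket becomes $-i\,p_j\prod_{l\neq j}(1-2p_l)$. This is purely imaginary, so its real part vanishes. The argument uses nothing about $t$ or the cues, which gives the ``every $j$, every $t$, every input'' clause.

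For the second-order claim, I differentiate once more and compute the full Hessian, not just the diagonal.
\begin{itemize}
\item The diagonal entries are $\partial^2_{\varphi_j}\langle Z_{n+1}\rangle=\mathrm{Re}\bigl[-p_je^{i\varphi_j}\prod_{l\neq j}f_l\bigr]$. At $\varphi=\pi\mathbf 1$ this equals $p_j\prod_{l\neq j}(1-2p_l)$.
\item The off-diagonal entries ($j\neq k$) are $\mathrm{Re}\bigl[-p_jp_ke^{i(\varphi_j+\varphi_k)}\prod_{l\neq j,k}f_l\bigr]$. At $\varphi=\pi\mathbf 1$ this equals $-p_jp_k\prod_{l\neq j,k}(1-2p_l)$.
\end{itemize}
When all $f_l\neq 0$, set $P=\prod_l(1-2p_l)$ and $a_j=p_j/(1-2p_j)$. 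The Hessian of $\langle Z_{n+1}\rangle$ then factors as $H=P\,(\mathrm{diag}(a)-aa^{\top})$. The narrative contrast is $\text{pull}_{\mathrm{nar}}=-\langle Z_{n+1}\rangle$, whose Hessian is $-H$, so a local maximum of the contrast requires $H$ to be positive definite.

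I expect the main obstacle to be that the claimed sign is not unconditional. The diagonal entry $p_j\prod_{l\neq j}(1-2p_l)$ is not strictly of one sign for all inputs: it vanishes if $p_j=0$, and it flips sign whenever an odd number of the other controls have $p_l>1/2$. So I would prove the second-order statement under an explicit regime and state that restriction. The regime I have in mind is $0<p_j<1/2$ for all $j$, i.e.\ early turns before any bias angle passes $\pi/2$. Then $P>0$ and $a_j>0$. By the matrix determinant lemma (Sherman--Morrison), $\mathrm{diag}(a)-aa^{\top}$ is positive definite iff $a^{\top}\mathrm{diag}(a)^{-1}a=\sum_j a_j<1$. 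Under that condition $-H$ is negative definite and $\varphi=\pi\mathbf 1$ is a strict local maximum of the contrast. The per-coordinate version, negative second derivative in each $\varphi_j$ separately, needs only $p_j>0$ and $\prod_{l\neq j}(1-2p_l)>0$. Outside these conditions I would exhibit the sign change, for example by taking one $p_l>1/2$, to show that the critical point can be a saddle.
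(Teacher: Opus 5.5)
Your first-order computation and the stationarity argument at $\varphi=\pi\mathbf{1}$ are what the paper does. The paper's only justification is the sketch inside the statement, and it gives no argument for the second-order claim. You are right to reject that claim as stated. The coordinatewise second derivative of $\text{pull}_{\mathrm{nar}}$ is $-p_j\prod_{l\neq j}(1-2p_l)$. It vanishes at $p_j=0$ and changes sign with the parity of $\#\{l\neq j:\,p_l>1/2\}$. Coordinatewise negativity would not give a local maximum anyway, so working with the full Hessian is the correct move.

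The step that fails is your factorization of that Hessian. Your own bullets give $H_{jj}=P\,a_j$ and $H_{jk}=-P\,a_ja_k$, but $P\bigl(\mathrm{diag}(a)-aa^{\top}\bigr)$ has diagonal entries $P(a_j-a_j^2)$. The correct identity is
\[
H=P\bigl(\mathrm{diag}(a_j(1+a_j))-aa^{\top}\bigr).
\]
In the regime $0<p_j<1/2$ the same rank-one argument then gives $H\succ 0$ if and only if $\sum_j a_j/(1+a_j)=\sum_j p_j/(1-p_j)<1$, not if and only if $\sum_j a_j<1$. Take two controls with $p_0=p_1=p$:
\begin{itemize}
\item the true Hessian has eigenvalues $p(1-p)$ and $p(1-3p)$, so it is positive definite for $p<1/3$;
\item your matrix has eigenvalues $p(1-2p)$ and $p(1-4p)$, so your criterion stops at $p<1/4$.
\end{itemize}
Your sufficient condition still holds, but only because the true $H$ exceeds your matrix by $P\,\mathrm{diag}(a^2)\succeq 0$. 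The claimed equivalence, and anything you say about where the maximum is lost inside the regime, must be redone.

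The corrected form also gives a sharper refutation than the one you planned. For $p\in(1/3,1/2)$ the two-control example is a saddle, even though no $p_l$ exceeds $1/2$ and every coordinatewise second derivative of $\text{pull}_{\mathrm{nar}}$ is strictly negative. That breaks exactly the statement's inference ``second derivative strictly negative, consequently a local maximum.''
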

 
Observe that the 
Proposition~\ref{prop:cnotstat} explains why the CNOT aggregation of a fixed
encoder could be a sensible design choice for the encoder. Besides, it also establishes our observation in numerical simulations that the entangler
angles do not move when training starts from $\Theta_0$, in fact the the first-order signal
is identically zero there. Accordingly, the evidence that training $\varphi$
helps comes only from the mis-specified runs, which start away from the
stationary point. 
Initializing $\varphi_j$ such as $\pi-0.2$
removes the degeneracy at negligible cost to interpretability.

\subsection{Gradients through the circuit and the joint training}\label{Sec:gradient}

Note that Eqn.~\eqref{eq:jointupdate} is the entire quantum-classical interface. The RL agent's surprise at turn $t$ is projected through its own weights onto
the Jacobian of the Born-rule expectations with respect to the gate
parameters. Besides, nothing about the conversation reaches the agent except the
register's readout, and nothing about the labels reaches the circuit except
$\delta_t$.

The Jacobian is obtained on hardware by the parameter-shift rule composed
with the chain rule through the classical angle map. Writing
$\mathcal{G}$ for the set of parameterised gate occurrences in the circuit
that produces $s_t$, and $\eta_g$ for the angle of occurrence $g$,
\begin{equation}
\frac{\partial \langle O\rangle}{\partial\Theta}
=\sum_{g\in\mathcal{G}}
\underbrace{\tfrac12\Bigl[\langle O\rangle_{\eta_g+\pi/2}
-\langle O\rangle_{\eta_g-\pi/2}\Bigr]}_{\text{parameter shift}}
\;\cdot\;
\frac{\partial \eta_g}{\partial\Theta},
\label{eq:pshift}
\end{equation}
which costs $2|\mathcal{G}|=2\bigl(t(n+2)+(n+2)\bigr)$ circuit evaluations
per turn, for any observable $O$. Note that the second factor is not a plain derivative because the angle map
\eqref{eq:gate} is recurrent since $\Delta_i^t$ depends on $\theta_i^{t-1}$, and
hence on all earlier cues but the recursion admits a closed form representation. The following lemma provides gradients of the saturating accumulation.

\begin{lemma}
\label{lem:accgrad}
With $g_i^s=\sigma(w_i c_{s,i}+b_i)$ and $\theta_i^t$ as in
\eqref{eq:softor}, and writing $P_i^t=\prod_{s\le t}(1-g_i^s)$,
\begin{equation}
\frac{\partial\theta_i^t}{\partial w_i}
=\pi\,P_i^t\sum_{s\le t} g_i^s\,c_{s,i},
\qquad
\frac{\partial\theta_i^t}{\partial b_i}
=\pi\,P_i^t\sum_{s\le t} g_i^s .
\end{equation}
\end{lemma}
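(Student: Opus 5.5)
The plan is to differentiate the closed form \eqref{eq:softor} directly, $\theta_i^t=\pi(1-P_i^t)$ with $P_i^t=\prod_{s\le t}(1-g_i^s)$. This avoids back-propagating through the recursion \eqref{eq:gate}. First I will justify the closed form by induction on $t$. Writing $\pi-\theta_i^t=(\pi-\theta_i^{t-1})(1-g_i^t)$, which is immediate from $\theta_i^t=\theta_i^{t-1}+(\pi-\theta_i^{t-1})g_i^t$, and using $\theta_i^0=0$, we get $\pi-\theta_i^t=\pi P_i^t$. Once this is in hand, the dependence on $(w_i,b_i)$ is entirely through the explicit factors $g_i^s$, and no cross-channel parameters enter.

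Next I will apply the product rule in logarithmic form. Each factor satisfies $1-g_i^s>0$, since the logistic function takes values in $(0,1)$. Hence
\[
\frac{\partial P_i^t}{\partial w_i}=P_i^t\sum_{s\le t}\frac{\partial\log(1-g_i^s)}{\partial w_i}
=-P_i^t\sum_{s\le t}\frac{1}{1-g_i^s}\frac{\partial g_i^s}{\partial w_i}.
\]
The logistic identity $\sigma'(x)=\sigma(x)(1-\sigma(x))$ gives $\partial g_i^s/\partial w_i=g_i^s(1-g_i^s)c_{s,i}$ and $\partial g_i^s/\partial b_i=g_i^s(1-g_i^s)$. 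The factor $(1-g_i^s)$ cancels, leaving $\partial P_i^t/\partial w_i=-P_i^t\sum_{s\le t}g_i^s c_{s,i}$ and the analogous expression for $b_i$ without $c_{s,i}$. Multiplying by $-\pi$ from $\theta_i^t=\pi(1-P_i^t)$ yields both stated formulas.

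There is no real obstacle. The only point needing care is the cancellation of $1-g_i^s$, which relies on the division being legitimate. The logarithmic derivative requires $1-g_i^s\neq 0$, and this is guaranteed by $g_i^s<1$. Alternatively, one can avoid logarithms altogether: expand $\partial\prod_s(1-g_i^s)$ as a sum over $s$ of $-\partial g_i^s\prod_{r\neq s}(1-g_i^r)$, and observe that the factor $(1-g_i^s)$ in $\partial g_i^s$ restores the missing term of the product. Finally, I will note that the result matches the recursive chain rule through \eqref{eq:gate}, which serves as a consistency check.
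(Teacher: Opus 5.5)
Your proposal is correct and follows essentially the same route as the paper: take the logarithmic derivative of $P_i^t$, cancel the factor $(1-g_i^s)$ using $\sigma'=\sigma(1-\sigma)$, and substitute into $\theta_i^t=\pi(1-P_i^t)$. The one addition is your inductive check that the closed form follows from the recursion, which the paper takes for granted.
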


\begin{proof}
$\partial\log(1-g_i^s)/\partial w_i=-\,g_i^s(1-g_i^s)c_{s,i}/(1-g_i^s)
=-g_i^s c_{s,i}$, so
$\partial P_i^t/\partial w_i=-P_i^t\sum_{s\le t}g_i^s c_{s,i}$; substitute
into $\theta_i^t=\pi(1-P_i^t)$. The $b_i$ case is identical with
$c_{s,i}$ replaced by $1$.
\end{proof}

If both running sums are accumulated online, the classical factor in
\eqref{eq:pshift} costs $O(n)$ per turn. In the simulations of
Section~\ref{sec:testing} we evaluate the same derivative exactly and
analytically from Proposition~\ref{prop:closedform} and
Lemma~\ref{lem:accgrad}; we verified this implementation against central
finite differences over all $16$ parameters and all $6$ observables at every
turn, with maximum absolute discrepancy $1.0\times10^{-9}$. 


\subsection{Algorithm~\ref{alg:train}}
\label{sec:alg2-walkthrough}
 
In this section we explain the execution of the Algorithm~\ref{alg:train}. First, we include all the notations in Table~\ref{tab:notation}.  

\begin{table}[t]
\centering
\caption{Notation used in Algorithm~\ref{alg:train}.}
\label{tab:notation}
\small
\setlength{\tabcolsep}{5pt}
\begin{tabular}{ll}
\toprule
Symbol & Meaning \\
\midrule
\multicolumn{2}{l}{Episode (one labelled conversation)}\\
$T$ & number of turns \\
$c_t\in[0,1]^{n}\!\times\![-1,1]$ & cue vector at turn $t$ from the NLP front end \\
$y\in\{0,1\}$ & conversation label; $1$ = fraud \\
$t^\ast$ & labelled critical loss turn (first turn of the loss stage) \\
\midrule
\multicolumn{2}{l}{Quantum encoder}\\
$\Theta\in\mathbb{R}^{3n+4}$ & circuit parameters, Eq.~\eqref{eq:params} \\
$\Theta_0$ & cognitive initialisation ($\varphi_j=\pi$, $\mu=-\pi/4$) \\
$\Phi=(s_1,\dots,s_T)$ & encoded feature trajectory for the episode \\
$s_t\in\mathbb{R}^{n+4}$ & state at turn $t$, Eq.~\eqref{eq:state} \\
$J_t=\partial (s_t)_{1:n+2}/\partial\Theta$ & Jacobian of the Born-rule
   features, $(n{+}2)\times(3n{+}4)$ \\
\midrule
\multicolumn{2}{l}{Reinforcement learner}\\
$\mathcal{A}=\{\textsc{Wait},\textsc{Flag}\}$ & action set; \textsc{Flag} is terminal \\
$W=(w_{\textsc{Wait}},w_{\textsc{Flag}})$ & linear head, $w_a\in\mathbb{R}^{n+4}$ \\
$Q(s_t,a)=w_a\!\cdot\! s_t$ & action-value function \\
$a_t$, $r_t$ & action taken and reward received at turn $t$ \\
$\delta_t$ & temporal-difference error, Eq.~\eqref{eq:tderr} \\
$\gamma=0.98$ & discount factor \\
$\epsilon$ & exploration probability, annealed $1\!\to\!0.05$ \\
$P_{\text{fa}},P_{\text{miss}},B,\beta$ & false-alarm penalty, miss penalty,
   detection bonus, lateness discount \\
\midrule
\multicolumn{2}{l}{Optimization}\\
$E$, $E_0$ & total epochs ($30$) and warm-up epochs ($\lfloor 0.34E\rfloor=10$) \\
$\alpha_w=0.02$, $\alpha_\Theta=0.003$ & learning rates for head and circuit \\
$G\in\mathbb{R}^{3n+4}$, $m$ & accumulated circuit gradient and its step count \\
$\lambda=0.02$ & anchor strength toward $\Theta_0$ \\
$\nu$ & RMSProp running second moment \\
$\odot,\ \oslash$ & elementwise square, elementwise division \\
\bottomrule
\end{tabular}
\end{table}

The head $W$ is initialized near zero and the circuit at the cognitive point $\Theta_0$, so the very first policy is uninformed but the very first feature map is a hand-designed encoder. Note that $\tau_0>0$ is a fixed time constant, in units of turns, that sets the
scale over which the saturating time feature $1-e^{-t/\tau_0}$ distinguishes
early from late turns. We use $\tau_0=5$, so the feature reads $0.63$ at
$t=\tau_0$ and $0.93$ at the final turn of a $T=14$ conversation. Each epoch sweeps the labelled conversations in a fresh random order. For one conversation the encoder is invoked once (per visit), before the agent acts, and it returns the entire trajectory $\Phi=(s_1,\dots,s_T)$ together with (when the circuit is unfrozen) the stack of per-turn Jacobians $J_1,\dots,J_T$, $J_t=\partial (s_t)_{1:n+2}/\partial\Theta\in\mathbb{R}^{(n+2)\times(3n+4)}$,
the Jacobian of the Born-rule features at turn $t$ with respect to the circuit
parameters; row $k$ is $\partial\bigl(-\langle Z_k\rangle_t\bigr)/\partial\Theta$
and the time and constant components of $s_t$ are omitted as they do not
depend on $\Theta$ (see Eq.~\eqref{eq:pshift}). The agent then replays that pre-computed trajectory turn by turn, choosing $a_t$ at each row.


Note that, during the $E_0$ warm-up epochs, the Jacobians are not formed at all. Indeed,  $\Phi$ costs one forward sweep,
whereas each $J_t$ costs $2|\mathcal{G}|$ shifted circuit evaluations. Thus, where the saturating
recursion $\theta_i^t=\theta_i^{t-1}+\Delta_i^t$ passes through every prefix
on its way to $t=T$ and so yields all $T$ states in a single pass. On
hardware, measurement collapses the state, so the trajectory genuinely
requires $T$ separate circuit executions; what
encoding once per episode avoids there is a further factor of $T$ from
re-encoding inside the agent loop.


 
At turn $t$ the agent evaluates $Q(s_t,a)=w_a\!\cdot\! s_t$ for both actions and
selects $a_t$ $\epsilon$-greedily, meaning that with probability $1-\epsilon$
it takes the currently best-estimated action $\arg\max_{a}Q(s_t,a)$ and with
probability $\epsilon$ it ignores its own estimate and samples uniformly from
$\mathcal{A}=\{\textsc{Wait},\textsc{Flag}\}.$ Thus, 
\begin{equation}
a_t=\begin{cases}
\arg\max_{a\in\mathcal{A}}Q(s_t,a) & \text{with probability } 1-\epsilon,\\[2pt]
\text{Uniform}(\mathcal{A}) & \text{with probability } \epsilon .
\end{cases}
\label{eq:epsgreedy}
\end{equation}
At evaluation we set $\epsilon=0$, so the reported detector is the deterministic
greedy policy $a_t^\ast=\arg\max_a Q(s_t,a)$.


The reward and bootstrap target follow
the three cases of the stopping problem: if $a_t=\textsc{Flag}$ the episode
terminates and $r_t$ is $-P_{\text{fa}}$ on a benign conversation or
$B\beta^{\max(0,t-t^\ast)}$ on a fraudulent one, with no bootstrap term; if
the agent reaches the last turn still waiting, it receives $-P_{\text{miss}}$
when $y=1$ and $0$ otherwise, again with no bootstrap; otherwise $r_t=0$ and
the target is $\gamma\max_{a'}Q(s_{t+1},a')$. Thus the temporal-difference error is given by
\begin{equation}
\delta_t=\bigl[\text{target}\bigr]-Q(s_t,a_t)
\label{eq:tderr}
\end{equation}
which measures how wrong the current estimate was, and it
drives both halves of the model. The classical head takes its usual
semi-gradient step $w_{a_t}\!\leftarrow\! w_{a_t}+\alpha_w\delta_t s_t$
immediately. The circuit's contribution for this turn,
$\delta_t\,(w_{a_t})_{1:n+2}^{\top}J_t$, is not applied immediately but
accumulated into $G$: this is the chain rule of
Eq.~\eqref{eq:jointupdate}, which captures ``how much would each gate angle have had
to change to move $Q(s_t,a_t)$ toward its target". Only the first $n+2$
components of $\bar w$ appear because the time feature and the constant do
not depend on $\Theta$. Choosing \textsc{Flag} breaks the turn loop.
 

Note that both updates are gradients of the
same objective $\tfrac12\bigl(\text{target}-Q(s_t,a_t)\bigr)^2$ under the
semi-gradient convention, so both must be evaluated at the same parameters.
Substituting the already-updated head weights instead perturbs the circuit
gradient by
\begin{equation}
\delta_t\bigl(w_{a_t}^{\text{new}}-\bar w\bigr)_{1:n+2}^{\top}J_t
=\alpha_w\,\delta_t^{2}\,(s_t)_{1:n+2}^{\top}J_t ,
\label{eq:headorder}
\end{equation}
which, being proportional to $\delta_t^{2}\ge 0$, is a sign-definite drift
along $s_t^{\top}J_t$ rather than zero-mean noise. Thus it systematically inflates
whichever channels are currently large, and does so most strongly on exactly
the episodes the agent finds most surprising. 

 

After the episode, $G$ is divided by the number $m$ of turns that contributed,
so a conversation flagged at turn $2$ does not receive a smaller circuit
update than one that ran to turn $14$. The anchor term
$-\lambda(\Theta-\Theta_0)$ is then subtracted, a quadratic pull back toward
the cognitive initialization, $G$ is clipped to unit norm, and the step is
scaled per coordinate by an RMSProp second moment $\nu$, so that parameters
of very different sensitivity, a cue gain $w_i$ and an entangler angle
$\varphi_j$ move the features by quite different amounts, take comparable
steps. Two guards make this stable. The circuit is frozen for the first
$E_0$ epochs, so the feature map does not move while the $Q$-values are still
random, and the anchor bounds how far the learned encoder can drift from the
documented baseline, which is a governance property as much as an
optimization one.
 

Finally, note that nothing about the conversation reaches the agent except the register's
readout $s_t(\Theta)$, and nothing about the labels reaches the circuit except
the scalar $\delta_t$. The encoder is therefore not trained to classify
turns. In fact, it is trained to make the stopping decision easier, under the same
cost asymmetry the deployed detector faces. This distinguishes
Algorithm~\ref{alg:train} from fitting a quantum feature map to labels and
then bolting an agent onto it.
 
It is worth stating explicitly that neither $\Theta$ nor $W$ is trained by
minimizing a loss. Indeed, the per-turn squared error
\begin{equation}
L_t(\Theta,W)=\tfrac12\bigl(y_t-Q_{\Theta,W}(s_t,a_t)\bigr)^2,
\qquad
y_t=\begin{cases}
r_t & \text{terminal step},\\
r_t+\gamma\max_{a'}Q(s_{t+1},a') & \text{otherwise},
\end{cases}
\label{eq:surrogate}
\end{equation}
is a surrogate whose target $y_t$ is held fixed during differentiation even
though it depends on $\Theta$ and $W$ through $Q(s_{t+1},\cdot)$. Thus, the update
direction $\delta_t\nabla_{\Theta,W}Q(s_t,a_t)$ omits the term
$-\gamma\nabla\max_{a'}Q(s_{t+1},a')$ (it is not $\nabla L_t$) and hence it is not the gradient of any fixed scalar function, so
semi-gradient temporal-difference learning is a stochastic approximation to a
fixed point of the projected Bellman optimality operator rather than a
descent method. The objective that is genuinely being optimized here is the
reinforcement-learning one, and because $r_t=0$ at every non-terminal step an
episode's return collapses to a single discounted terminal reward, making the
problem exactly one of optimal stopping,
\begin{equation}
J(\Theta,W)=\mathbb{E}_{(c_{1:T},\,y,\,t^\ast)\sim\mathcal{D}}
\bigl[\gamma^{\,\tau-1}R_\tau\bigr],
\qquad
R_\tau=\begin{cases}
-P_{\text{fa}} & y=0,\ \tau\le T,\\
B\,\beta^{\max(0,\tau-t^\ast)} & y=1,\ \tau\le T,\\
-P_{\text{miss}} & y=1,\ \text{never flagged},\\
0 & y=0,\ \text{never flagged},
\end{cases}
\label{eq:stopobj}
\end{equation}
where $\tau$ is the stopping time induced by the greedy policy. The one
component of the circuit update that is an exact gradient is the
anchor, $\lambda(\Theta-\Theta_0)=\nabla_\Theta\tfrac{\lambda}{2}
\lVert\Theta-\Theta_0\rVert_2^2$. We therefore report mean episode return
rather than a training loss throughout. Here $\mathcal{D}$ denotes the distribution over labelled conversations, each
draw being a triple $(c_{1:T},y,t^\ast)$ comprising a cue trajectory, its
label, and its critical turn, with the horizon $T$ itself part of the draw;
in our experiments $\mathcal{D}$ is the generative process of
Section~\ref{sec:testing}, and in deployment it would be the population of
monitored conversations. 

\subsubsection{Stability of the joint update mechanism}
It is well-known that training a feature map with a semi-gradient TD error is the deadly triad in
miniature \cite{van2018deep}. 
In particular, the TD targets are obtained by
bootstrapping from a function approximator whose underlying feature
representation is itself being updated during training. Consequently, both
the value estimates and their feature representation evolve simultaneously,
which can lead to unstable learning dynamics. In our experiments, such
instability was not only associated with slower convergence, in fact without
appropriate stabilization, the joint update learning occasionally collapsed to degenerate
policies that either never issued a flag or flagged every conversation in the early runs.
To mitigate this behavior, we employ the following three stabilization
mechanisms.

\begin{itemize}
\item \textbf{Warm-up.} The parameters of the quantum circuit are kept fixed during the first third
of the training epochs. This allows the Q-function to acquire a meaningful
initial estimate before the feature representation itself is allowed to
adapt.
\item \textbf{Anchoring.} 
We regularize the circuit parameters toward their cognitively motivated
initial values by adding
\[
-\lambda(\Theta-\Theta_0)
\]
to the circuit gradient, with $\lambda=0.02$. This regularization constrains
the learned encoder to remain interpretable while still allowing it to adapt to the
learning objective. The parameter $\lambda$ therefore controls a trade-off
between interpretability and adaptability. For example, with
$\lambda=0.06$, the model is unable to recover from the deliberately
mis-specified initialization considered in
Table~\ref{tab:pqc-ablation}.

\item \textbf{Per-episode, RMSProp-scaled steps.} 
The circuit gradients are accumulated and averaged over each episode,
followed by norm clipping and scaling using a running estimate of their
second moment. The circuit learning rate is set to
$\alpha_{\Theta}=0.003$. This procedure limits abrupt changes in the
learned feature representation caused by individual TD updates.
\end{itemize}



\subsection{Synthetic data generation}

In Figure \ref{fig:synthetic-data}, we describe our synthetic data generation process.

\begin{figure}[ht]
\centering
\includegraphics[width=1.0\linewidth]{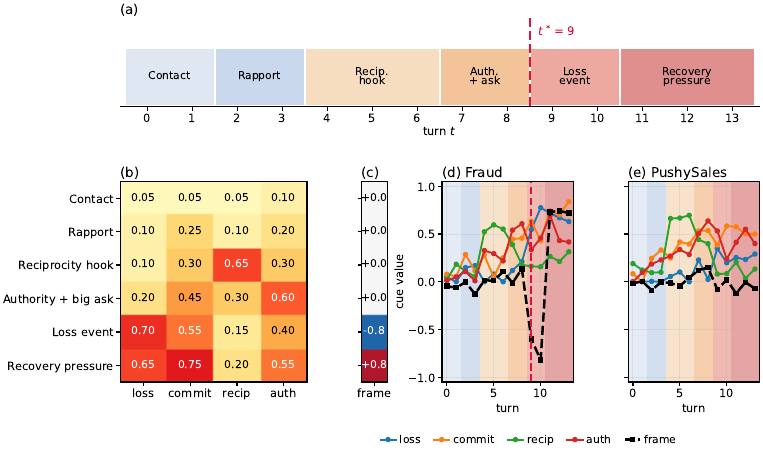}
\caption[Synthetic conversation generator.]{%
\textbf{The synthetic conversation generator.}
\textbf{(a)}~Each fraudulent conversation is divided into six stages by the
boundaries $\lfloor\mathrm{linspace}(0,T,7)\rfloor$, shown here for $T=14$;
the labelled critical turn $t^\ast$ is the first turn of the loss-event stage
(turn~9), and detection lateness is measured relative to it.
\textbf{(b)}~Each stage $s$ carries a fixed base level $b^{(s)}_i\in[0,1]$ for
the four bias channels. 
\textbf{(c)}~The frame channel $b^{(s)}_f\in[-1,1]$ is zero for the first four
stages and non-zero only in the last two, with opposite signs, so every
fraudulent conversation contains exactly one $R_Z\!\leftrightarrow\!R_Y$
transition. By the sign convention of Eq.~\eqref{eq:frame}, $c_{t,f}>0$ is loss
framing and drives $R_Z$ while $c_{t,f}<0$ is recovery framing and drives
$R_Y.$ 
\textbf{(d,e)}~Sampled realizations, drawn as
$c_{(t,i)}=\mathrm{clip}(b^{(s)}_i+\varepsilon,\ \ell_i,u_i)$ with
$\varepsilon\sim\mathcal{N}(0,\sigma^2)$, $\sigma=0.10$, independently per turn
and per channel, so there is within-stage variation but no autoregressive
structure. Shading marks the stage, the dashed red line marks $t^\ast$. 
}
\label{fig:synthetic-data}
\end{figure}


\end{document}